\newcommand\vldbdoi{10.14778/3489496.3489511}
\newcommand\vldbpages{312 - 320}
\newcommand\vldbvolume{15}
\newcommand\vldbissue{2}
\newcommand\vldbyear{2022}
\newcommand\vldbauthors{Xupeng Miao, Hailin Zhang, Yining Shi, Xiaonan Nie, Zhi Yang, Yangyu Tao, Bin Cui}
\newcommand\vldbtitle{\shorttitle} 
\newcommand\vldbavailabilityurl{https://github.com/PKU-DAIR/Hetu/}
\newcommand\vldbpagestyle{empty} 
\newcommand*{\circled}[1]{\lower.7ex\hbox{\tikz\draw (0pt, 0pt)%
    circle (.5em) node {\makebox[1em][c]{\small #1}};}}
\newcommand\blfootnote[1]{%
  \begingroup
  \renewcommand\thefootnote{}\footnote{#1}%
  \addtocounter{footnote}{-1}%
  \endgroup
}
\newtheorem{lemma}{Lemma}
\newtheorem{theorem}{Theorem}
\newtheorem{definition}{Definition}
\newcommand{\name}{HET\xspace}
\newcommand{\sname}{Het\xspace}
  \providecommand\BibTeX{{%
    \normalfont B\kern-0.5em{\scshape i\kern-0.25em b}\kern-0.8em\TeX}}}
\begin{document}

\title{\name: Scaling out Huge Embedding Model Training via Cache-enabled Distributed Framework
}


\author{Xupeng Miao$^{*1}$, Hailin Zhang$^{*1}$, Yining Shi$^{1}$, Xiaonan Nie$^{1}$, Zhi Yang$^{1}$, Yangyu Tao$^{2}$, Bin Cui$^{1,3}$}

\affiliation{
\institution{
{$^{1}$}Department of Computer Science \& Key Lab of High Confidence Software Technologies (MOE), Peking University\\
{$^{3}$}Institute of Computational Social Science, Peking University (Qingdao),
{$^{2}$}Tencent Inc. \\
{$^{3}$}Center for Data Science, Peking University \& National Engineering Laboratory for Big Data Analysis and Applications\\
\{xupeng.miao, z.hl, shiyining, xiaonan.nie, yangzhi, bin.cui\}@pku.edu.cn, {$^{2}$}brucetao@tencent.com}
}

\begin{abstract}
Embedding models have been an effective learning paradigm for high-dimensional data. However, one open issue of embedding models is that their representations (latent factors) often result in large parameter space. We observe that existing distributed training frameworks face a scalability issue of embedding models since updating and retrieving the shared embedding parameters from servers usually dominates the training cycle. In this paper, we propose \name, a new system framework that significantly improves the scalability of huge embedding model training. We embrace skewed popularity distributions of embeddings as a performance opportunity and leverage it to address the communication bottleneck with an \emph{embedding cache}. To ensure consistency across the caches, we incorporate a new consistency model into \name design, which provides fine-grained consistency guarantees on a per-embedding basis. Compared to previous work that only allows staleness for read operations, \name also utilizes staleness for write operations. Evaluations on six representative tasks show that \name achieves up to 88\% embedding communication reductions and up to $20.68\times$ performance speedup over the
state-of-the-art baselines.\blfootnote{$^*$~Equal contribution.}
 
\end{abstract}

\settopmatter{printfolios=true}
\maketitle

\pagestyle{\vldbpagestyle}
\begingroup\small\noindent\raggedright\textbf{PVLDB Reference Format:}\\
\vldbauthors. \vldbtitle. PVLDB, \vldbvolume(\vldbissue): \vldbpages, \vldbyear.\\
\href{https://doi.org/\vldbdoi}{doi:\vldbdoi}
\endgroup
\begingroup
\renewcommand\thefootnote{}\footnote{\noindent
This work is licensed under the Creative Commons BY-NC-ND 4.0 International License. Visit \url{https://creativecommons.org/licenses/by-nc-nd/4.0/} to view a copy of this license. For any use beyond those covered by this license, obtain permission by emailing \href{mailto:info@vldb.org}{info@vldb.org}. Copyright is held by the owner/author(s). Publication rights licensed to the VLDB Endowment. \\
\raggedright Proceedings of the VLDB Endowment, Vol. \vldbvolume, No. \vldbissue\ %
ISSN 2150-8097. \\
\href{https://doi.org/\vldbdoi}{doi:\vldbdoi} \\
}\addtocounter{footnote}{-1}\endgroup

\ifdefempty{\vldbavailabilityurl}{}{
\vspace{.3cm}
\begingroup\small\noindent\raggedright\textbf{PVLDB Artifact Availability:}\\
The source code of this research paper has been made publicly available at \url{\vldbavailabilityurl}.
\endgroup
}

\section{Introduction}

To train a model on high-dimensional data, such as words
in a corpus of text~\cite{DBLP:conf/kdd/PerozziAS14,DBLP:journals/corr/abs-1301-3781,DBLP:conf/wsdm/AnandKSZZ19} or the
user-item interaction data~\cite{DBLP:conf/sigir/Wang0WFC19,DBLP:conf/kdd/ZhouZSFZMYJLG18,DBLP:journals/dase/GharibshahZHC20}, it is common to use an embedding model, which projects a sparse high-dimensional feature space, into a
continuous low-dimensional \emph{embedding} space.
For example, in a language model, a training example might be a sparse vector
with non-zero entries corresponding to the IDs of words
in a vocabulary, and the distributed representation for each
word will be a lower-dimensional vector. ``Wide and
deep learning''~\cite{DBLP:conf/recsys/Cheng0HSCAACCIA16} creates distributed representations from
cross-product transformations on categorical features. Embedding model is common at modern web companies (e.g., Facebook~\cite{DBLP:journals/corr/abs-1906-00091}, Google~\cite{DBLP:conf/recsys/CovingtonAS16} and Tencent~\cite{DBLP:conf/kdd/ZhangBLLBW20}), which have been recognized as an
effective learning paradigm to extract useful
information for downstream tasks such as recommendation.

As each feature needs to be represented by a
set of embeddings (i.e., latent vectors), many embedding models are at a \emph{giant} scale and are too large to copy to a worker on every
use, or even to store in RAM on a single host. For instance, the parameters of a real-world document embedding model in Google~\cite{DBLP:journals/corr/DaiOL15,DBLP:conf/osdi/AbadiBCCDDDGIIK16} occupies several terabytes, and the industrial click-through rate prediction model in Baidu~\cite{DBLP:conf/mlsys/ZhaoXJQDS020} has $10^{11}$ input sparse features and also requires 10 Tb parameters.
For this reason, it is challenging to scale embedding models up to large-scale use cases, in which millions or even billions of parameters need to be learned.

Modern distributed ML systems (e.g., TensorFlow~\cite{DBLP:conf/osdi/AbadiBCCDDDGIIK16}) typically adopt the parameter server~\cite{DBLP:conf/osdi/LiAPSAJLSS14} framework to scale out
models. The server usually maintains
the globally shared parameters by aggregating updates
from the workers and updating the global parameters.
Workers communicate only
with the server nodes, updating and retrieving the shared
parameters. Existing ML systems usually support data parallelism where
a worker
usually contains a replica of the ML model and is assigned an
equal-sized partition of the entire training data. Bulk Synchronous Parallel (BSP)~\cite{DBLP:journals/mp/GhadimiLZ16} or Asynchronous Parallel
(ASP)~\cite{DBLP:conf/icml/LianZZL18} are usually adopted for updating the model parameters during distributed training.

However, this setup faces a
scalability issue for large embedding models~\cite{DBLP:conf/sc/XieRLYXWLAXS20,DBLP:conf/mlsys/ZhaoXJQDS020}. We observe that the greatest inefficiency comes from
\textit{updating and retrieving the shared
feature embedding parameters}
through a limited bandwidth link. For example, using TensorFlow with ASP, up to 86\% of training time is spent
on embedding fetching and updating, which dominates
the training cycle.
The major reason is that an embedding model often uses deep neural
networks with low computational complexity, comparing with the giant embedding data. Accordingly,
the computation takes a much shorter time than the reads and writes of remote embedding data. Moreover, with the increasing gap between emerging powerful accelerators and the slow growth of network bandwidth, the embedding communication bottleneck would become even more severe.
To our knowledge, there is little prior work
addressing the scalability issue of embedding models in a distributed environment.

In this paper, we propose \textbf{\ul{\name}}, a novel distributed system framework to
scale \textbf{\ul{H}}uge \textbf{\ul{E}}mbedding model \textbf{\ul{T}}raining. Our key idea is to exploit an efficient \textbf{embedding-cache-enabled} architecture, which is mainly inspired by the critical characteristics for embedding models: \emph{popularity skewness} and \emph{staleness tolerant}. 
Specifically, the popularity distribution of embeddings is often highly skewed,
typically following power-law distributions~\cite{DBLP:conf/sigir/ZhangC15}, implying a performance
opportunity: \emph{a small cache of hot embedding at
each worker can effectively save the network bandwidth while scaling training throughput with the number of workers}. 

Replicating shared embedding data in multiple caches raises the problem of consistency in the presence of writes. 
Fortunately, embedding models are iterative convergent algorithms 
where some staleness errors  
during training are acceptable and will not prevent convergence. 
In other words, embedding models are robust to a bounded amount of inconsistency (e.g., reading out-of-date shared state). 
By relaxing the consistency guarantees properly,
we can exploit the opportunity of caches to gain significant system improvements.
However, conventional relaxed consistency models such as Stale Synchronous Parallel (SSP)~\cite{DBLP:conf/nips/HoCCLKGGGX13} are not aware of the presence of skew access and
require that every single
worker should be able to hold an entire set of parameters.
Moreover, they mainly target straggler problems rather than communication overhead. For example, SSP maintains an up-to-date global model through sent out write updates to servers each clock. Considering the large scale and communication cost of embedding models, the above issues become a critical limitation for scaling out the training.

To address the above issues, we incorporate a new consistency model into \name. Our consistency model
differs from the traditional ones in two aspects.
First, we enable the \textbf{fine-grained} caching and consistency that provides guarantees on a per-embedding basis. Specifically, for each cached embedding, we leverage an embedding-specific Lamport clock to manage its fine-grained
consistency actions (e.g., validation, synchronization) and provide the concept of ``per-embedding-clock-bounded'' consistency --- a worker can see all updates of an embedding older than certain embedding-specific clocks. We provide a fine-grained consistency model and theoretically prove its convergence guarantees.
Second, compared to previous works that only allow staleness for read operations, we further utilizes staleness for writes, allowing \textbf{stale-writes} based on the timestamp deviation between the global and local clocks of each embedding. This means that writing to an embedding residing in the cache does not update the underlying global model until the embedding is invalided or evicted from the cache. This feature is critical in reducing communication overheads. 

\begin{figure}[t]
\centering
\includegraphics[width=0.95\linewidth]{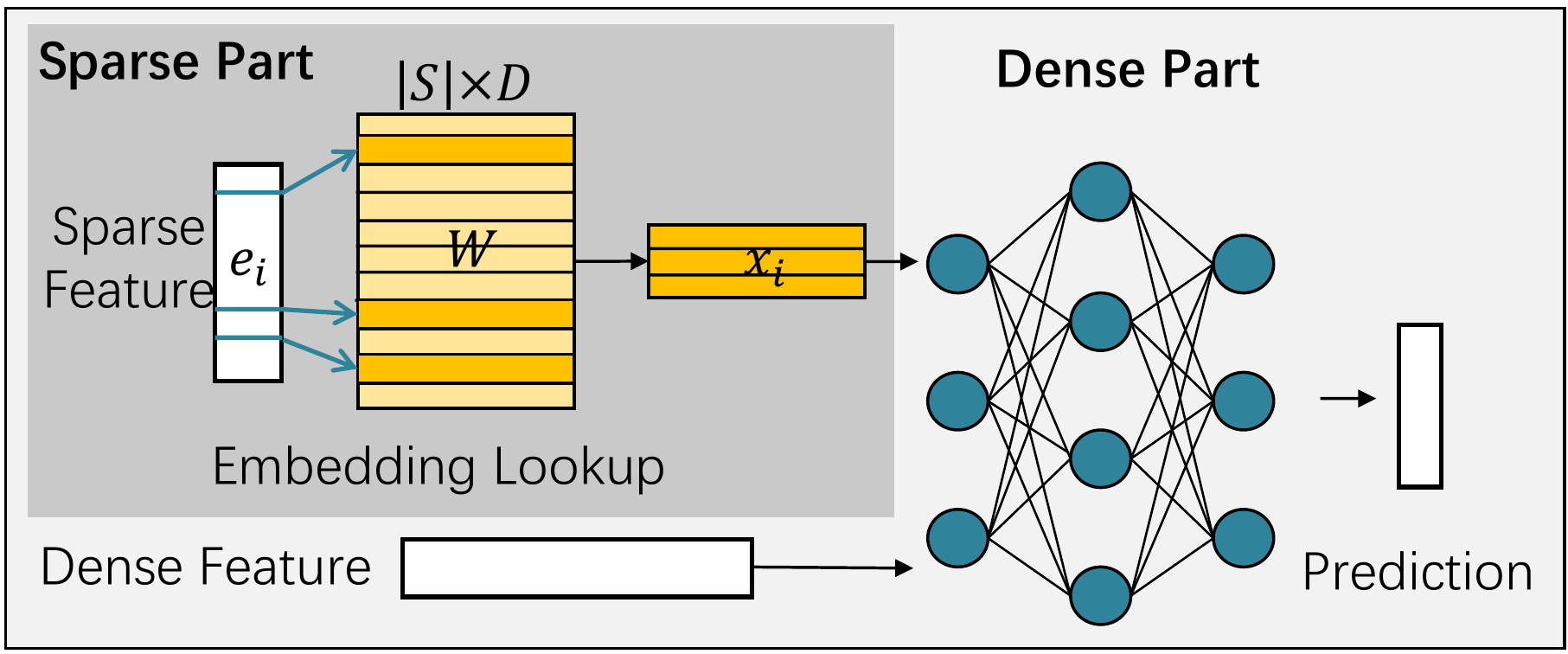}
\vspace{-2mm}
\caption{Illustrate of embedding model architecture}
\vspace{-5mm}
\label{fig:embmodel}
\end{figure}

We summarize our contributions as follows:
First, we reveal the performance bottleneck and the opportunity for scaling huge embedding models, and introduce a novel \textbf{system abstraction} with embedding cache.
Second, we employ a new cache \textbf{consistency model} that provides (1) clock-bounded consistency at the fine-grained of each embedding, and (2) allows staleness for
both caches read and write operations for minimizing communication overhead.
Finally, we build \textbf{\name system}, a new framework that implements the proposed system abstraction and the consistency model, and supports $10^{12}$ parameters scale embedding model training, achieving $6.37-20.68\times$ speedup and up to 88\% embedding communication reduction over the state-of-the-art baseline systems.

\section{Preliminary}
\subsection{Distributed Training}
\paragraph{\textbf{Distributed machine learning.}}The target of machine learning is to find a model $\mathbf{x}\in\mathbb{R}^d$ ($d$ is the total number of parameters in the model) that minimizes the empirical risk:
\begin{equation}
\small{
    \min_{\mathbf{x}} \Big[F(\mathbf{x}) := \frac{1}{|\xi|}\sum_{i}f(\mathbf{x};\xi_i)\Big],
}
\end{equation}
where $f(\cdot)$ is the loss function, $\xi$ is the training dataset and $\xi_i$ represents the $i$-th data sample. 
Distributed ML systems have been extensively studied in recent years to scale up ML for big data and large models.
Parameter Server (PS) is a trendy data parallelism architecture for many existing systems (e.g., TensorFlow~\cite{DBLP:conf/osdi/AbadiBCCDDDGIIK16}, PS2~\cite{DBLP:conf/sigmod/Zhang0SYJM19}). 
Another choice is All-Reduce and several recent systems (e.g., PyTorch~\cite{DBLP:journals/pvldb/LiZVSNLPSVDC20}, Horovod~\cite{DBLP:journals/corr/abs-1802-05799}) show superior performance over PS with the help of NCCL~\cite{nccl}, especially for dense models.

\paragraph{\textbf{Parallel training paradigms.}}
Most of data parallelism studies manage to keep consistent model performance as the standalone mini-batch SGD. 
BSP assumes that all $N$ workers are fully synchronized and performing the following update rule:
\begin{equation}
\small{
    \mathbf{x}(t+1) = \mathbf{x}(t) - \eta \Big[ \frac{1}{N}\sum_{i=1}^{N}G^i(\mathbf{x}(t);\xi^{i})\Big],
}
\end{equation}
where $\eta$ is the learning rate, $\xi^i$ are randomly sampled from the training set, and $G^i(\cdot)$ denotes the gradient from the $i$-th worker. 
However, the frequent synchronization and straggler problem bring significant communication costs. ASP avoids such overheads by allowing the workers to proceed without waiting for each other. But the model degradation happens because of the stale gradients. To balance the trade-off between training efficiency and model performance, SSP and several variants~\cite{DBLP:conf/sigmod/JiangCZY17} have been proposed. It has been proved that they could share the same convergence rate with BSP when the staleness is upper bounded~\cite{DBLP:conf/nips/LianHLL15}. Unfortunately, SSP requires to store the replication of the entire model inside every single worker, which is impractical for giant models.

\subsection{Embedding Models}
Many types of embedding models (e.g., Wide \& Deep~\cite{DBLP:conf/recsys/Cheng0HSCAACCIA16}, Deep \& Cross~\cite{DBLP:conf/kdd/WangFFW17},
DeepFM~\cite{DBLP:conf/ijcai/GuoTYLH17}, xDeepFM~\cite{DBLP:conf/kdd/LianZZCXS18} and Deep Interest Network~\cite{DBLP:conf/kdd/ZhouZSFZMYJLG18}) have been developed for high-dimension data, and have achieved widespread success in recommender
systems--a critical service for internet companies.
Figure~\ref{fig:embmodel} illustrates a common embedding model architecture.
In order to handle categorical data, 
embedding tables map categorical features to dense representations in
an abstract space.
In particular, each embedding
lookup may be interpreted as using a one-hot
vector $e_i$ (with the $i$-th position being 1 while
others are 0, where index $i$ corresponds to $i$-th
category) to obtain the corresponding row vector
of the embedding table
$\mathbf{W}\in\mathbb{R}^{|S|\times D}$
as: $\mathbf{x}_i=\mathbf{W}^\top \mathbf{e}_i.$

\begin{figure}[t]
\centering
\includegraphics[width=0.8\linewidth]{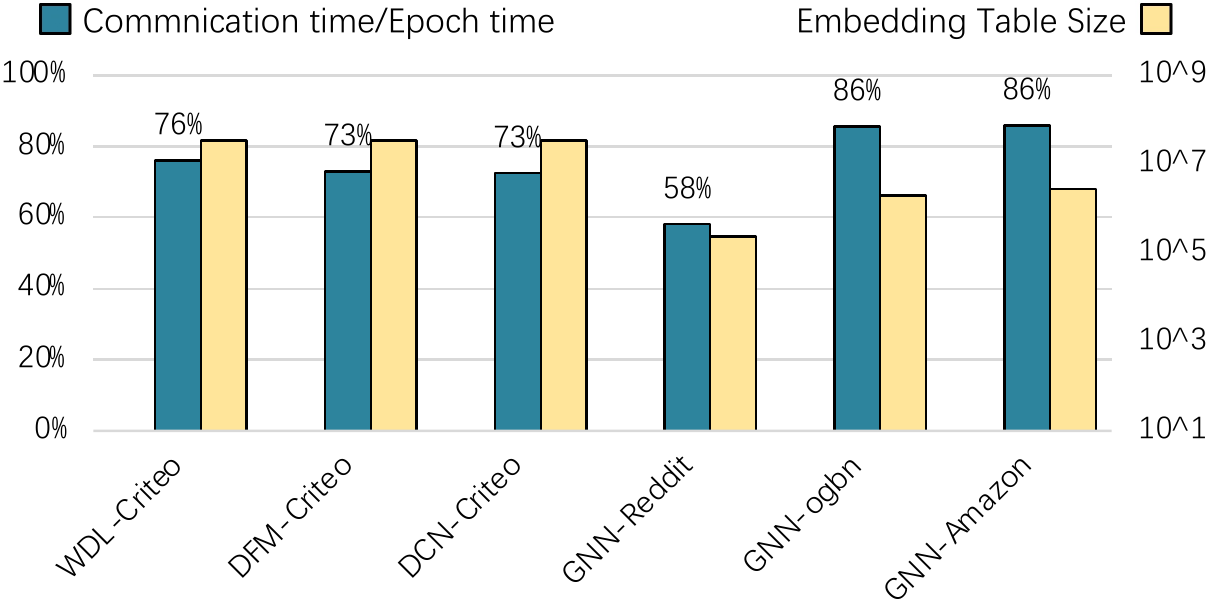}
\vspace{-4mm}
\caption{Large embedding model workloads on TensorFlow}
\vspace{-4mm}
\label{fig:workload}
\end{figure}

\vspace{-3mm}
\subsection{Problems and Opportunities}

\paragraph{\textbf{Communication cost}}
Large-scale embedding models suffer communication bottlenecks from both dense parameters and sparse parameters. PS~\cite{DBLP:conf/sigmod/Zhang0SYJM19} is often suitable for the sparse communication based on the embedding lookup operations. AllReduce~\cite{DBLP:conf/sigmod/MiaoNSYJM021} is highly optimized for the dense communication across GPUs (e.g., NCCL). But it has to degenerate to the inefficient AllGather primitive for sparse communication.
Considering the difference in the sparsity of model parameters, Parallax~\cite{DBLP:conf/eurosys/KimYPCJHLJC19} proposes a hybrid communication architecture that combines PS and AllReduce to transfer sparse and dense parameters respectively. Kraken~\cite{DBLP:conf/sc/XieRLYXWLAXS20} follows the hybrid architecture and optimizes the embedding memory usage. HugeCTR~\cite{hugectr} is NVIDIA’s high-efficiency GPU framework designed for recommendation systems on multiple GPUs, but it is memory restricted since all embedding parameters must be maintained within GPUs.

In general, sparse embeddings dominate the communication bottleneck for large-scale embedding model training. We evaluate the distributed training efficiency of TensorFlow on six popular high-dimensional (almost up to $10^7$) embedding workloads on a single worker, including both click-through rate prediction and graph representation learning. Only the embedding table is deployed on the remote PS. We use a small embedding size $D=32$ and the network bandwidth is 1 Gbps. Figure~\ref{fig:workload} summarizes the time occupation of data transfer and the number of parameters over different embedding models and datasets. Clearly, across all the models and datasets, communication takes much longer time than computation. 
This situation will become more common at modern web companies as the embedding model is still growing in industrial applications (e.g., $|S|=10^9$ to $10^{11}$ in~\cite{DBLP:conf/sc/XieRLYXWLAXS20,DBLP:conf/mlsys/ZhaoXJQDS020} and $D=10^4$ in~\cite{DBLP:journals/corr/DaiOL15}).
Meanwhile, we also find the following characteristics that provide us opportunities
to further improve the performance of embedding model training.

\begin{figure}[t]
\centering
\includegraphics[width=1\linewidth, height=0.33\linewidth]{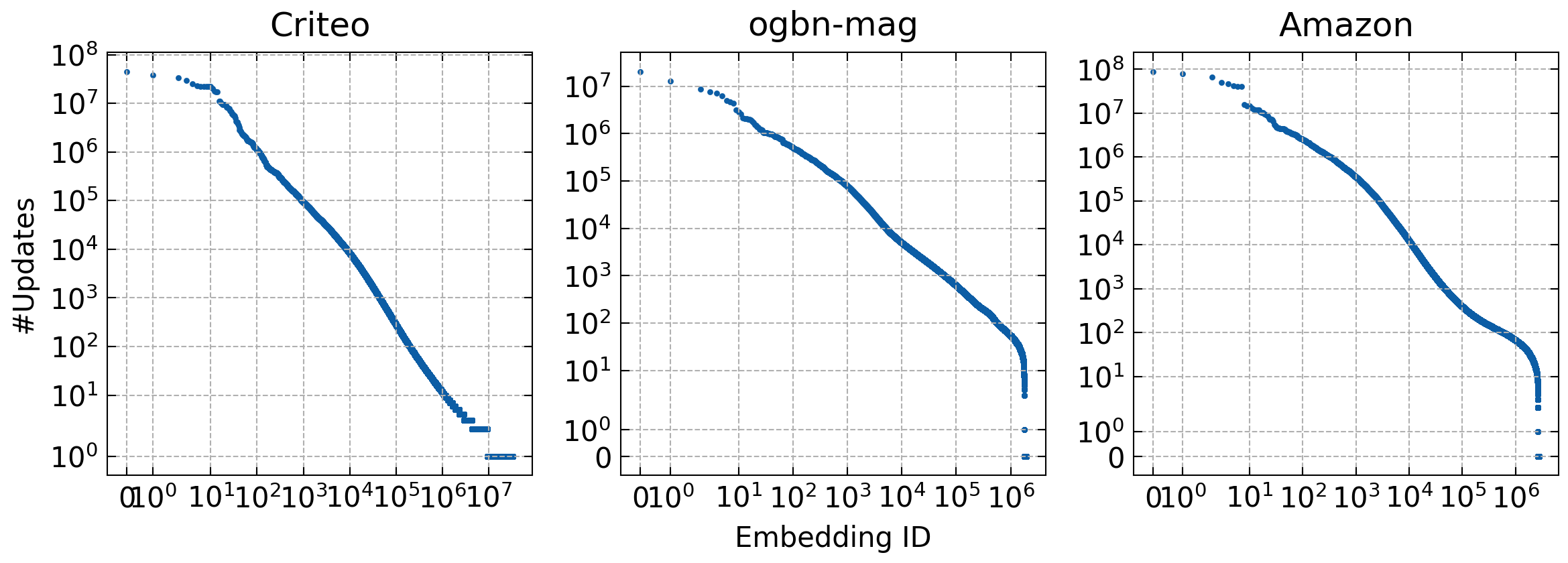}
\vspace{-6mm}
\caption{Embedding popularity skewness}
\vspace{-4mm}
\label{fig:powerlaw}
\end{figure}

\vspace{-1mm}
\paragraph{\textbf{Skewness}} Figure~\ref{fig:powerlaw} illustrates the skew distribution of embedding update frequency on some popular workloads, including click-through rate prediction (i.e., Criteo), citation network (i.e., ogbn-mag), and product co-purchasing network (i.e., Amazon). 
The top 10\% popular embeddings on Criteo could account for 90\% total number of updates. The observation motivates us to reduce the communication by caching frequently updated embeddings in limited local memory. 
Existing research provides evidence that parameter updates from various embedding models exhibit a universal skewed distribution~\cite{9261124}, such as recommendation models~\cite{DBLP:journals/tmis/AdomaviciusZ12,DBLP:conf/www/KangCCYLHC20,DBLP:conf/recsys/ZhaoCCJBBC18}, LDA topic models~\cite{DBLP:conf/eurosys/KimHLZDGX16,xing2016strategies,DBLP:journals/pvldb/YuCZS17,DBLP:journals/dase/HeLZTHD20} and graph learning models~\cite{DBLP:journals/pvldb/LowGKBGH12,DBLP:journals/chinaf/PengLYGWLWR20}.

\vspace{-1mm}
\paragraph{\textbf{Robustness}}
Introducing cache raises the problem of ensuring consistency in the presence of writes. Existing embedding models fall into the category of iterative convergent algorithms which start from a randomly chosen initial point and converge to optima by repeating iteratively a set of procedures. Such iterative convergent process has been shown robust to bounded amount of inconsistency and still converge correctly~\cite{DBLP:conf/icml/LianZZL18}.
This property allows frameworks to improve system performance by relaxing cache consistency models and reading from local (out-of-date) caches.

\begin{figure}[t]
\centering
\includegraphics[width=0.85\linewidth]{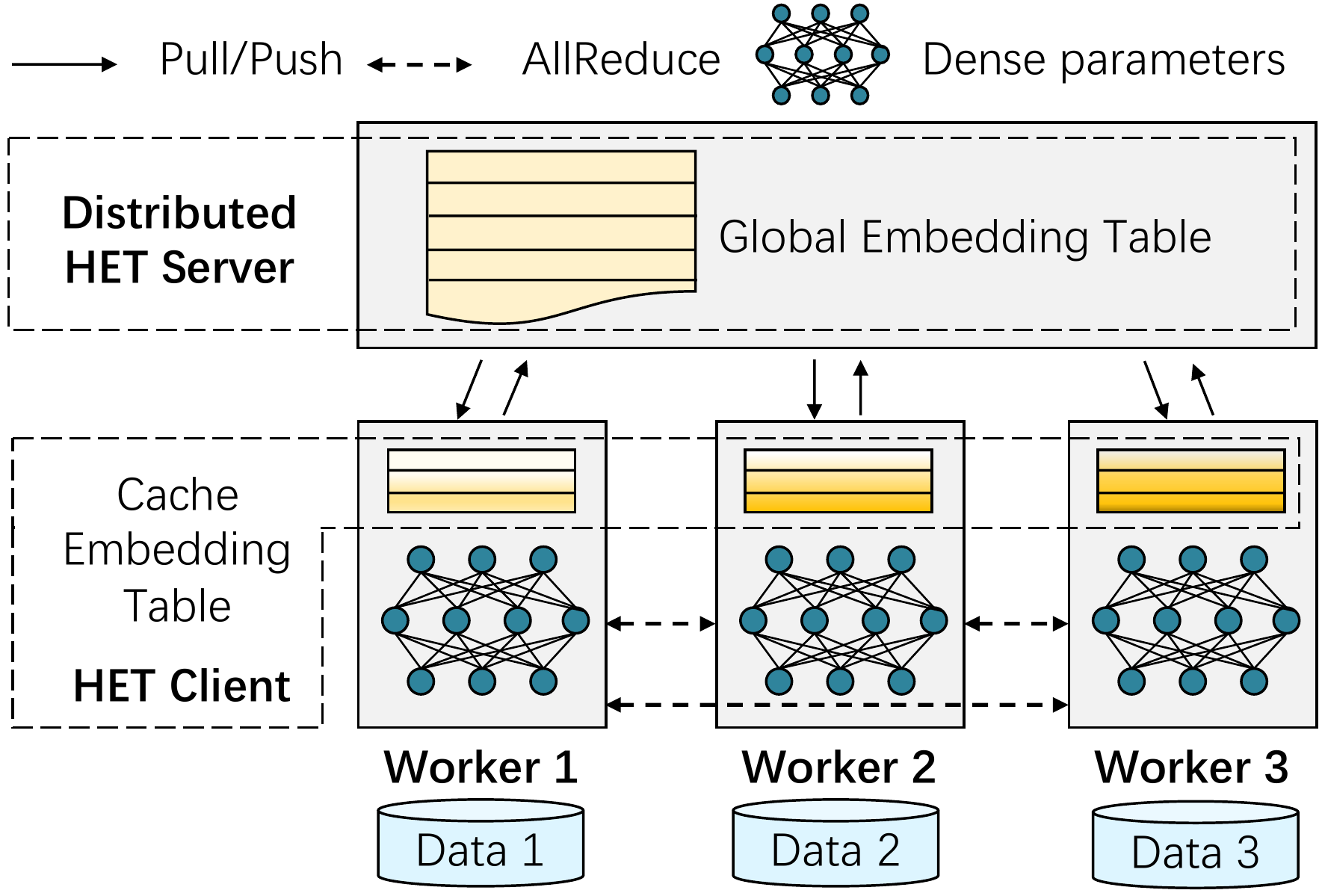}
\vspace{-4mm}
\caption{System architecture of \name}
\vspace{-4mm}
\label{fig:arch}
\end{figure}

\section{\name Design}
\label{sec:design}

This section describes a novel system solution to
scale embedding models by exploiting cache.
Figure~\ref{fig:arch} provides an overview of our system, which distributes the training data into multiple workers. Each worker holds a replication of dense model parameters and uses AllReduce for gradients synchronization during training. 
\name organizes shared embedding parameters as tables.
The whole embedding parameters are stored in the global embedding table on the \name server. 
The client is responsible for the management of the local cache through communicating with servers to control the inconsistency between the local and the global embedding table.
Below, we first introduce the design of cache management and read/write protocols. Then we analyze the cache consistency and model convergence guarantees enforced by our system.

\subsection{Cache Management}\label{sec:model}
The embeddings are organized in a collection of rows in the \textbf{embedding table}. Each embedding represents a sparse feature ID denoted by a unique key $k$. In the global embedding table, each global embedding $\mathbf{x}_k$ records a global Lamport clock~\cite{DBLP:journals/cacm/Lamport78} $\mathbf{x}_k.c_g$ indicating the total number of updates on this embedding. The servers storing the global embedding table act as the cache coordinator.
Each worker can cache a small subset of the embedding table locally.  Instead of recording the ``clock time'' of each client, \name supports fine-grained timing to coordinate cache synchronization at per-embedding basis.
In the cache embedding table, each local embedding $\mathbf{x}_k^i$ on the $i$-th worker records two clocks. (1) The \emph{start clock} $\mathbf{x}_k^i.c_s$ denotes the observed global clock when the last time the embedding $\mathbf{x}_k$ was fetched from server to worker $i$. (2) To guarantee
``read-my-updates'' for a worker $i$, we also increases the \emph{local clock} time $\mathbf{x}_k^i.c_c$ of the embedding $\mathbf{x}_k$ by one once it is updated in a iteration on that worker.

Our system provides operations on the embedding table to manage the cache. Listed below are the core methods of the \name client library. We illustrate some of these interfaces in the execution flow for one iteration in Figure~\ref{fig:interface}.

\begin{figure}[t]
\centering
\includegraphics[width=0.9\linewidth]{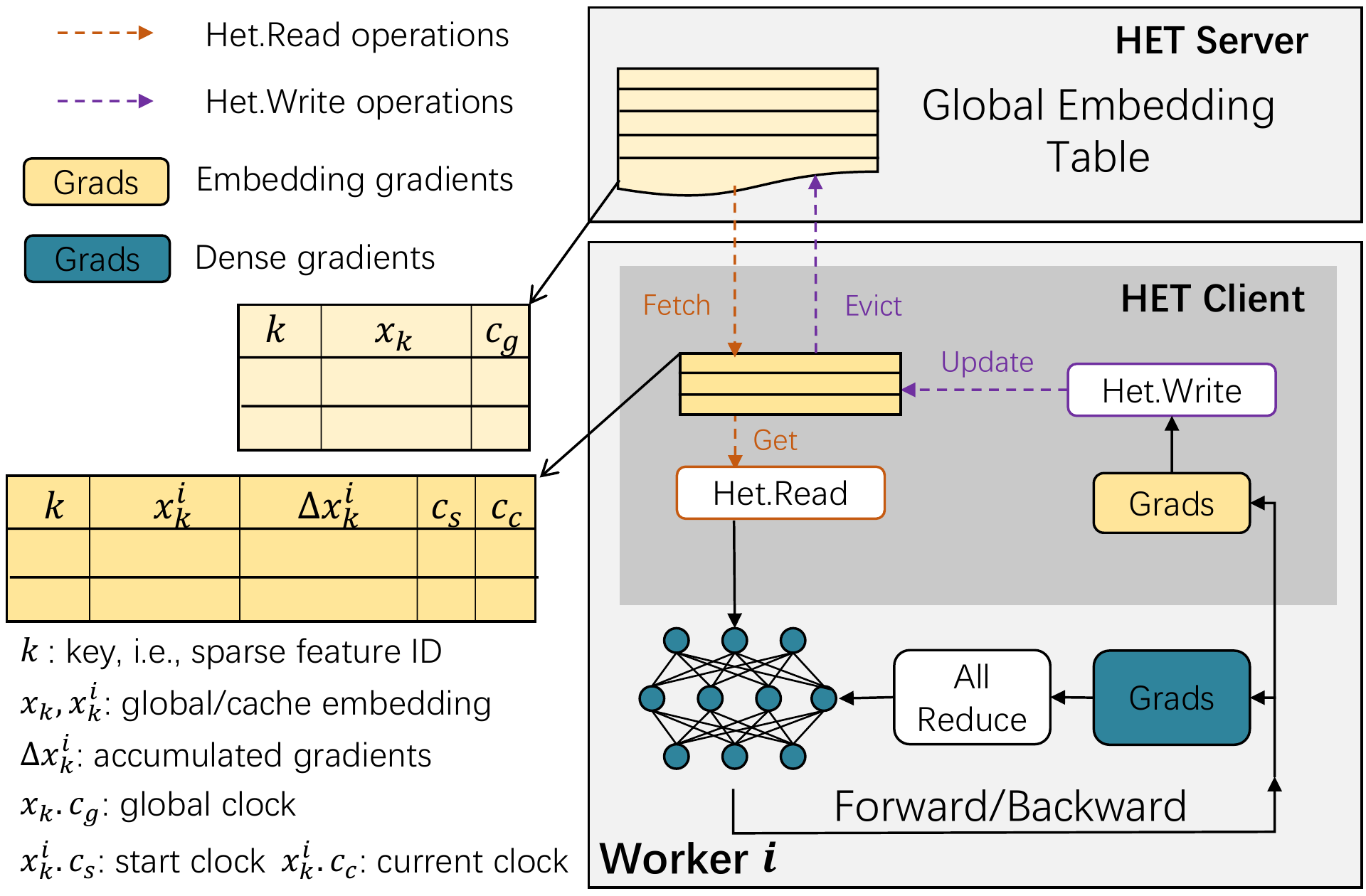}
\vspace{-4mm}
\caption{Workflow of \name}
\vspace{-2mm}
\label{fig:interface}
\end{figure}

\texttt{\sname.Cache.Fetch(key):} This operation directly reads the embedding indexed by \texttt{key} from the global embedding table on the server. For the fetched embedding $\mathbf{x}_k^i$, the start clock $\mathbf{x}_k^i.c_s$ and local clock $\mathbf{x}_k^i.c_c$ both are set to be equal to its global clock $\mathbf{x}_k.c_g$.

\texttt{\sname.Cache.Evict(key):} If input parameter \texttt{key} is provided, this operation finds and evicts the corresponding embedding and push the accumulated gradients and local clock $\mathbf{x}_k^i.c_c$ to the server.
The server receives the accumulated gradients, applies them on the corresponding entry of global embedding table, and synchronizes the global clock $\mathbf{x}_k^i.c_g= \max(\mathbf{x}_k^i.c_g, \mathbf{x}_k^i.c_c)$.
If \texttt{key} is not provided, this operation tries to evict the overflowed embeddings selected by certain cache policies (e.g., LRU, LFU) to prevent the cache table from exceeding the size limitation. We further discuss the selection of cache policies in Section~\ref{sec:cache_policy}.

\texttt{\sname.Cache.CheckValid(key):} This operation finds the local embedding $x_k^i$ from the cache according to the \texttt{key} and returns true if its current clock $x_k^i.c_c$ satisfies the following two time-bound conditions: 
(1) the current clock should not be too far ahead of the start clock, i.e., $x_k^i.c_c\leq x_k^i.c_s+s$; (2) the current clock should not be too far behind its global clock, i.e., $x_k.c_g\leq x_k^i.c_c+s$. Here $s$ is a user-defined staleness threshold to determine the cache validity. Since the global clock is recorded on the server, to validate condition (2), we have to send $x_k^i.c_c$ from the worker to the server when the cache hit occurs. Note that, the communication costs in this step are not significant because we only send the clocks, rather than the embedding vectors.

\begin{algorithm}[t]
\caption{\sname Client}
\label{alg:client}
\LinesNumbered
\KwIn{input dataset $\xi$, max iterations $T$, model parameters $\mathbf{x}_0$ and $\mathbf{x}_e$, DL runtime \texttt{DL}}
\KwOut{Trained model $\mathbf{x}_0$ and $\mathbf{x}_e$.}
Initialize model parameter $\mathbf{x}_0$\;
\texttt{\sname.Intialize($\mathbf{x}_e$)}\;
\For{i $\in$ \text{range}($T$)}{
    $\xi_i\gets$sample a mini-batch of data from $\xi$\;
    $K_i\gets$ the unique embedding key set of $\xi_i$\;
    $E_i\gets \texttt{\sname}.\texttt{Read}(K_i)$ \tcc*{Read embeddings}
    $G_i\gets \texttt{DL.Forward/Backward}(\xi_i, E_i, \mathbf{x}_0)$\;
    $\texttt{DL.Update}(G_i(\mathbf{x}_0))$ \tcc*{Locally update dense}
    $\texttt{\sname}.\texttt{Write}(K_i, G_i(\mathbf{x}_e))$ \tcc*{Write embedding}
}
\end{algorithm}

\subsection{Read/write Protocols} 
We briefly introduce the workflow of \name client in Algorithm~\ref{alg:client}. After each worker initializes the model parameters, it repeats the iterations based on the mini-batch SGD algorithm and trains the embedding model. The client extracts the unique keys (i.e., feature ID) from the mini-batch of data and \texttt{Read}s the corresponding embeddings. The DL executor performs forward and backward computation using these model parameters and input data. After that, we could update the dense model parameters and sparse embeddings, respectively.

\texttt{\sname.Read(keys):} Read a set of embedding vectors based on the requested \texttt{keys} as shown in Algorithm~\ref{alg:read}. For each key $k$, the client first checks whether $k$ exists in the cache embedding table (i.e., \texttt{\sname.Cache.Find(key)} in line 3). If not, the client fetches the latest version embedding from the server and adds it into the local cache embedding table temporarily (line 8); If so, the client further checks whether the caching embedding is valid (line 4) and manages to cache up with the server by synchronizations (line 5). 
For those embeddings within the staleness threshold, the client directly reads from the local embedding table (i.e., \texttt{\sname.Cache.Get(key)}).

\texttt{\sname.Write(keys, gradients)}: Writing back the embedding gradients as shown in Algorithm~\ref{alg:write}. Our cache embedding table allows \textbf{stale-writes} to reduce the communication cost between client and server. Since all the embeddings with \texttt{keys} have been loaded in the cache embedding table before the forward and backward computation, we could directly write the gradients locally by accumulating them on the corresponding rows of the cache embedding table (i.e., \texttt{\sname.Cache.Update(key, grad)} in line 2).
This enforces the read-my-updates property,
which ensures that the data read by a client contains all
its own updates.
Meanwhile, these cache embeddings should increase their current clocks by 1 (line 3). These accumulated updates could only be written back to server later through the cache eviction operation, which become ``stale'' relative to the global embedding table.

\subsection{Cache Consistency Guarantee}

\begin{algorithm}[t]
\caption{\sname.Read}
\label{alg:read}
\LinesNumbered 
\KwIn{input key set $K$}
\KwOut{Embedding set $E$}
$E\gets\{\}$\;
\For{k $\in$ K}{
    \eIf{\texttt{\sname.Cache.Find(k)}}{
        \If{not \texttt{\sname.Cache.CheckValid(k)}}{
            \texttt{\sname.Cache.Evict(k)} \tcc*{Synchronize}
            \texttt{\sname.Cache.Fetch(k)}  \tcc*{embeddings}
        }
    }{
       \texttt{\sname.Cache.Fetch(k)};
    }
    $E\gets E\ \cup\ $\texttt{\sname.Cache.Get(k)}\;
}
\end{algorithm}

\begin{algorithm}[t]
\caption{\sname.Write}
\label{alg:write}
\LinesNumbered
\KwIn{input key set $K$, the embedding gradients $G$}
\For{k $\in$ K}{
    \texttt{\sname.Cache.Update(k, $G_k$)}\;
    \texttt{\sname.Cache.Clock($k$)} \tcc*{Increase $c_c$ by 1}
}
\texttt{\sname.Cache.Evict}()\;
\end{algorithm}

From the per-embedding perspective, each embedding might exist in multiple cache embedding tables during training. Therefore, the cache consistency guarantee is crucial for the final model quality. Before interpreting the cache consistency model, we first clarify the following lemma on the clock consistency between any two embedding replications in different workers:

\begin{lemma}
 \label{lem:stale}
  For any $\mathbf{x}_k$, let $\mathbf{x}_k^i, \mathbf{x}_k^j$ are its two replicas cached on worker $i,j$, respectively, \name guarantees that:
  \begin{equation}
    \small{
      \forall k, \max_{\forall 0\leq i, j\leq N}\{|\mathbf{x}_k^i.c_c-\mathbf{x}_k^j.c_c|\}\leq 2s.
      }
  \end{equation}
\end{lemma}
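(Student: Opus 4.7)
The plan is to prove Lemma~\ref{lem:stale} by sandwiching both replicas' local clocks around the server-side global clock $c_g$, so that the $2s$ gap follows by the triangle inequality. The two staleness conditions enforced by \texttt{\sname.Cache.CheckValid} are precisely designed to pin $c_c$ on either side of $c_g$: condition (1) bounds $c_c$ from above in terms of $c_s$, and condition (2) bounds $c_c$ from below in terms of $c_g$. The protocol's Evict-then-Fetch re-synchronization on cache misses is what keeps these invariants alive throughout training.

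Concretely, I would first observe that for any worker $i$ holding a valid replica of $\mathbf{x}_k$, condition (1) yields $\mathbf{x}_k^i.c_c \leq \mathbf{x}_k^i.c_s + s$. Since $\mathbf{x}_k^i.c_s$ was set equal to $c_g$ at worker $i$'s most recent \texttt{\sname.Cache.Fetch}, and since $c_g$ is monotonically non-decreasing (the server only updates it via $c_g \gets \max(c_g, c_c)$ on \texttt{\sname.Cache.Evict}), we get $\mathbf{x}_k^i.c_c \leq c_g + s$. Symmetrically, condition (2) directly gives $\mathbf{x}_k^i.c_c \geq c_g - s$. Hence any currently valid replica of $\mathbf{x}_k$ lies in the interval $[c_g - s,\; c_g + s]$, and applying this to both $i$ and $j$ yields $|\mathbf{x}_k^i.c_c - \mathbf{x}_k^j.c_c| \leq 2s$.

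The main obstacle I anticipate is in making precise the ``current'' value of $c_g$ used in the two bounds: condition (1) is most naturally anchored to $c_g$ at worker $i$'s most recent \texttt{Fetch}, while condition (2) is checked at worker $j$'s most recent \texttt{CheckValid}, and between those moments $c_g$ may advance due to evictions by other workers, and each local $c_c$ may advance via \texttt{\sname.Cache.Clock} on subsequent writes. To handle this, I would pick a common reference moment — naturally, the later of the two validation events — and transport both inequalities to that moment using the monotonicity of $c_g$ (on the server) and of each $c_c$ (each worker's Clock only increments). Once both bounds are anchored to a single $c_g$, the claim collapses to a one-line subtraction; the delicate work is in arguing that, whenever either inequality is about to be violated between reference moments, a cache miss on the next Read forces an Evict/Fetch that restores the invariant before any inconsistent state is observed.
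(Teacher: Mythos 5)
Your proposal is correct and follows essentially the same argument as the paper's proof: both use the two \texttt{CheckValid} conditions to sandwich each replica's local clock in $[\mathbf{x}_k.c_g - s,\, \mathbf{x}_k.c_g + s]$ (via $\mathbf{x}_k^i.c_c \leq \mathbf{x}_k^i.c_s + s \leq \mathbf{x}_k.c_g + s$ and $\mathbf{x}_k.c_g - s \leq \mathbf{x}_k^i.c_c$) and conclude by subtraction. Your extra care about anchoring both bounds to a common value of $c_g$ is a reasonable refinement of a point the paper leaves implicit by fixing a single iteration $t$, but it does not change the underlying argument.
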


\begin{proof}
    For any embedding $\mathbf{x}_k$ at iteration $t$, the replication on the $i$-th worker is denoted by $\mathbf{x}_k^i$.
    Based on the conditions in \texttt{CheckValid(key)}, we have: $\mathbf{x}_k.c_g-s\leq \mathbf{x}_k^i.c_c$ and $\mathbf{x}_k^i.c_c\leq \mathbf{x}_k^i.c_s+s\leq \mathbf{x}_k.c_g+s$. Therefore, for any two different workers $i$ and $j$, the difference between their local current clocks is upper bounded by $2s$.
\end{proof}

Lemma~\ref{lem:stale} formally describes the clock-bounded guarantee at per embedding basis.
This guarantee enforces the following consistency model across embedding replications in multiple caches:

\begin{definition}[\textbf{Per-embedding clock bounded consistency}]
For any embedding $\mathbf{x}_k$, the consistency model guarantees that a worker $i$ sees the updates of any other worker $j$ on embedding $\mathbf{x}_k$  in the range of $[0, \mathbf{x}_k^j.c_c-2s]$.
\end{definition}

It is worth pointing out that Lemma 1 and the per-embedding clock bounded consistency only describe the observable embeddings.
If a worker fetches a key, makes an update, and never sees that key again, while other workers continue to see that key. The embedding in that worker is going to be evicted
when the cache capacity is full, and the update would be written back to the server. In the corner case, it might remain in the cache until the model completes the training process, we could simply ignore that update due to the robustness of iterative convergent algorithms.

\subsection{Convergence Analysis}
Recall that we differ from traditional SSP in the following aspects to improve performance: (1) SSP is not aware of the presence of skew access and provides bounded staleness at the coarse granularity measured by worker clocks, whereas we provide bounded staleness at the fine granularity of individual embedding clocks considering access heterogeneity. (2) SSP assumes a write-through cache so that the server is up-to-date, whereas we adopt a write-back-with-stale server to improve write performance. Given these key differences, we present a new theoretical analysis of the proposed consistency model, instead of reusing that of SSP.
We summarize our proof results showing that our algorithm is guaranteed to converge under our consistency model. The details of the assumptions and proofs are in Appendix A~\cite{hetappendix}.
We decompose the $d$ model parameters into two categories, including the dense parameters $x_0$ and the sparse parameters of $\mathbf{x}_e$ containing $m$ embedding vectors $(x_1, x_2, \ldots, x_m)$. We denote $x_i$ and $\nabla_i f(\mathbf{x})$ as the $i$-th component of $\mathbf{x}$ and $\nabla f(\mathbf{x})$, respectively. Clearly, $\mathbf{x} = (x_0, x_1, \ldots, x_m)$ and $\nabla f = (\nabla_0 f, \nabla_1 f, \cdots, \nabla_m f)$. We have the following theorem:

\begin{theorem}[Global Convergence Rate]
\label{thm:convergence}
  Consider an arbitrary objective function $f$, under the per-embedding-clock-bounded consistency model and and certain assumptions,
  Given the success parameter $\epsilon > 0$, a constant learning rate value 
  \begin{equation}
  \small{\eta \leq \min( \frac{\sqrt{ \epsilon } } {4\sqrt {3}sLMB}, \frac{\sqrt{ \epsilon } } {4\sqrt {L}sMB} , \frac{\epsilon}{ 12 M^2B^2 L})\nonumber}
  \end{equation}
  and  $T = \Theta \left( \frac{ f ( x(0) ) - f_{\text{inf}}}{\epsilon \eta} \right)$ iterations, for worker $j$, we are guaranteed to reach some iterate $\mathbf{x}^j(t^\star)$ with $1 \leq t \leq T$ such that \small{$$\mathbb{E} \| \nabla f(\mathbf{x}^j(t^\star)) \|^2 \leq \epsilon.$$}
\end{theorem}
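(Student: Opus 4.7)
The plan is to follow a standard non-convex async-SGD convergence template, adapted to handle the per-embedding staleness and the write-back-with-stale cache. I would first fix a global enumeration of all embedding updates: imagine a virtual sequence $\{\mathbf{x}(t)\}_{t\geq 0}$ that applies one mini-batch update at a time (in some canonical serialization of the concurrent updates), and for each worker $j$ let $\hat{\mathbf{x}}^j(t)$ denote the actually-read view used to compute the gradient at step $t$. The first key step is to use Lemma~\ref{lem:stale} together with the (assumed) bound $B$ on per-iteration update magnitudes and the worker count $M$ to prove a componentwise staleness bound of the form $\|\hat{\mathbf{x}}^j(t)-\mathbf{x}(t)\|\leq 2sMB$, summing over at most $2s$ missing clock ticks across at most $M$ writers per embedding; the write-back-with-stale case is absorbed here because an uncommitted accumulated gradient on any embedding still adds at most $s$ ticks of drift before it must be flushed.

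Next I would invoke $L$-smoothness of $f$ and apply the descent lemma to the virtual iterate:
\begin{equation}
f(\mathbf{x}(t+1)) \leq f(\mathbf{x}(t)) - \eta \langle \nabla f(\mathbf{x}(t)), G^j(\hat{\mathbf{x}}^j(t))\rangle + \tfrac{L\eta^2}{2}\|G^j(\hat{\mathbf{x}}^j(t))\|^2.\nonumber
\end{equation}
Taking conditional expectation, replacing $\mathbb{E}[G^j(\hat{\mathbf{x}}^j(t))]=\nabla f(\hat{\mathbf{x}}^j(t))$, and splitting
\begin{equation}
\langle \nabla f(\mathbf{x}(t)), \nabla f(\hat{\mathbf{x}}^j(t))\rangle = \|\nabla f(\mathbf{x}(t))\|^2 + \langle \nabla f(\mathbf{x}(t)), \nabla f(\hat{\mathbf{x}}^j(t))-\nabla f(\mathbf{x}(t))\rangle,\nonumber
\end{equation}
I would control the cross term by $L$-smoothness, $\|\nabla f(\hat{\mathbf{x}}^j(t))-\nabla f(\mathbf{x}(t))\|\leq L\cdot 2sMB$, and a Young's inequality split so the staleness error becomes a manageable additive term plus a $\tfrac{1}{2}\|\nabla f(\mathbf{x}(t))\|^2$ absorption. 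The $\|G\|^2$ term is handled by a standard second-moment bound (involving $B$ and $L$), which is where the $M^2B^2L$ factor in the $\eta$ constraint will come from.

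The third step is the telescoping/sum argument: summing the per-step inequality from $t=1$ to $T$, the $f(\mathbf{x}(t+1))-f(\mathbf{x}(t))$ terms collapse into $f(\mathbf{x}(1))-f_{\text{inf}}$, and after dividing by $\eta T$ I obtain a bound on the running average $\tfrac{1}{T}\sum_t \mathbb{E}\|\nabla f(\mathbf{x}(t))\|^2$. The three terms inside the $\min$ defining $\eta$ are then exactly what is needed to force each error contribution (staleness bias from $sLMB$, second-moment-through-staleness from $sMB\sqrt{L}$, and variance from $M^2B^2L$) below a constant fraction of $\epsilon$. Picking $t^\star$ to be the minimizer of $\mathbb{E}\|\nabla f(\mathbf{x}^j(t))\|^2$ (and using that $\mathbf{x}^j(t^\star)$ differs from $\mathbf{x}(t^\star)$ by at most the staleness drift, itself $L$-smoothly controlled) yields the stated $\epsilon$-stationarity guarantee in $T=\Theta((f(\mathbf{x}(0))-f_{\text{inf}})/(\epsilon\eta))$ iterations.

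The main obstacle I anticipate is the bookkeeping around the per-embedding (rather than per-worker) staleness combined with stale-writes. Classical async-SGD proofs treat staleness as a uniform scalar delay applied to the whole parameter vector, but here each coordinate of $\hat{\mathbf{x}}^j(t)$ may have been read at a different global clock, and meanwhile that worker's own unflushed accumulated gradients live only in its cache. I expect to spend the most effort on making the serialization and the two-sided inequality in \texttt{CheckValid} translate cleanly into the coordinate-wise bound $\|\hat{\mathbf{x}}^j(t)-\mathbf{x}(t)\|^2\leq (2sMB)^2$, since this is the single inequality that feeds both the staleness-bias and the second-moment steps and therefore drives the final form of the $\eta$ constraint.
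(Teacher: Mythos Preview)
The paper does not include a proof of Theorem~\ref{thm:convergence} in the provided source; it states only ``The details of the assumptions and proofs are in Appendix~A~\cite{hetappendix}'', which is an external document. So a line-by-line comparison is not possible from what you and I have access to here.

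That said, your plan is the standard nonconvex async-SGD template (descent lemma on a serialized virtual iterate, staleness controlled via Lipschitzness, telescoping, then reading off the $\eta$ constraints from the three error sources), and the form of the theorem---three terms in the $\min$ for $\eta$ scaling as $\sqrt{\epsilon}/(sLMB)$, $\sqrt{\epsilon}/(\sqrt{L}sMB)$, and $\epsilon/(M^2B^2L)$, together with $T=\Theta((f(\mathbf{x}(0))-f_{\inf})/(\epsilon\eta))$---is exactly what that template produces. It is very likely that the appendix follows the same skeleton, since this is essentially the only way such statements are proved in the async-SGD literature (cf.\ Lian et al., NeurIPS 2015, which the paper itself cites).

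One caution on your bookkeeping step: you write the drift bound as $\|\hat{\mathbf{x}}^j(t)-\mathbf{x}(t)\|\leq 2sMB$ by ``summing over at most $2s$ missing clock ticks across at most $M$ writers per embedding.'' Be careful that Lemma~\ref{lem:stale} gives a $2s$ bound \emph{per embedding per pair of workers}, so if $M$ denotes the number of workers the missed updates on a single embedding can be as many as $2sM$, but you then still need to aggregate over the $m$ embedding coordinates to get an $\ell_2$ bound on the full parameter vector. Whether the final bound is $2sMB$ or involves an additional factor depends on what exactly $B$ is assumed to bound (per-coordinate gradient, full gradient, or second moment)---and since the paper hides the assumptions in the external appendix, you should state yours explicitly. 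The numerical constants $4\sqrt{3}$, $4$, and $12$ in the theorem suggest a specific Young's-inequality split (e.g.\ $ab\leq \tfrac{1}{4}a^2+b^2$ in one place and $ab\leq \tfrac{1}{2}a^2+\tfrac{1}{2}b^2$ in another), which you can reverse-engineer once you fix your assumption on $B$.
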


\begin{figure*}[t]
\centering
\subfigure[WDL-Criteo]{
\scalebox{.315}{
\includegraphics[width=1.0\linewidth]{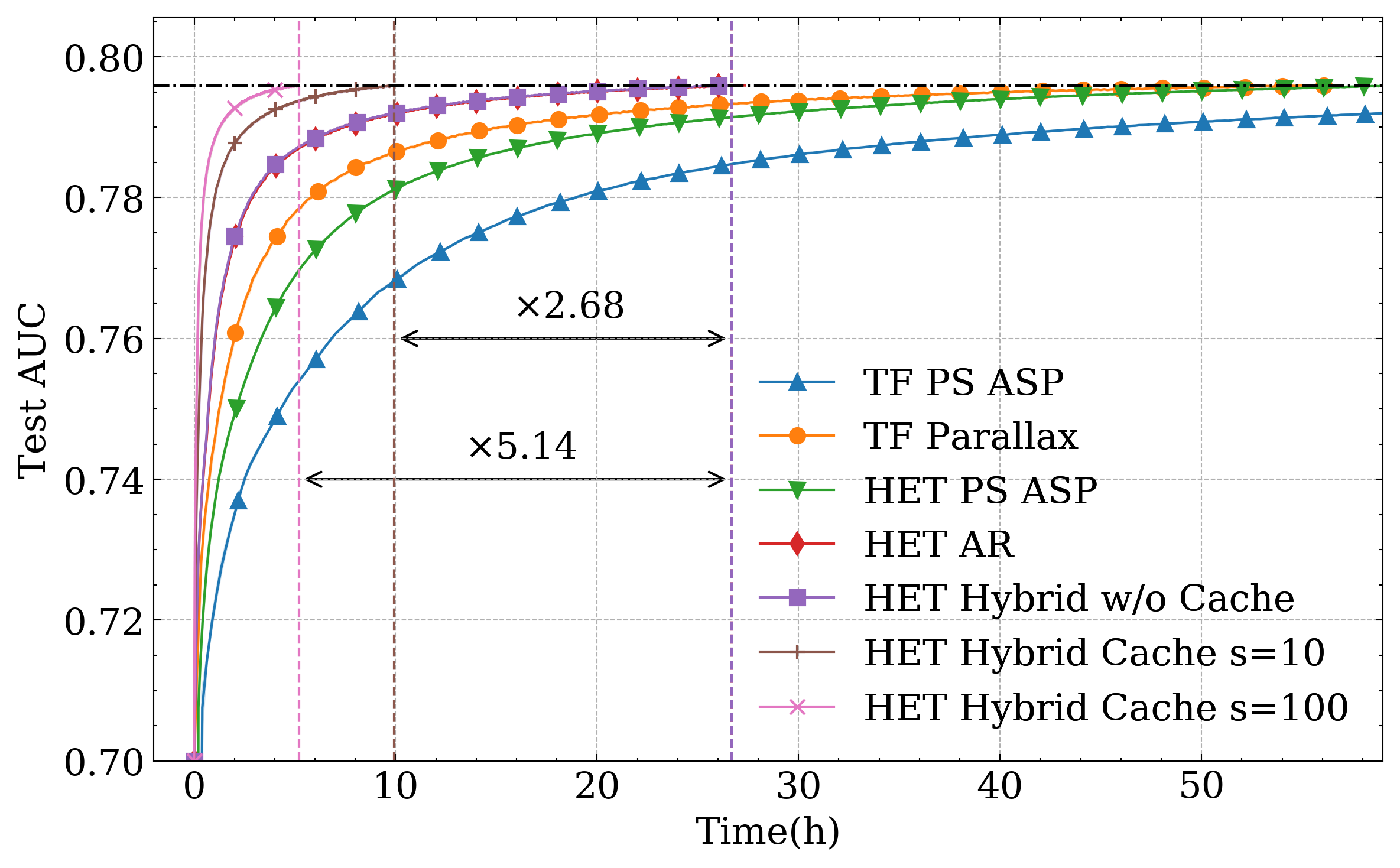}
}
\label{fig:wdl_conv}
}
\subfigure[DFM-Criteo]{
\scalebox{.315}{
\includegraphics[width=1.0\linewidth]{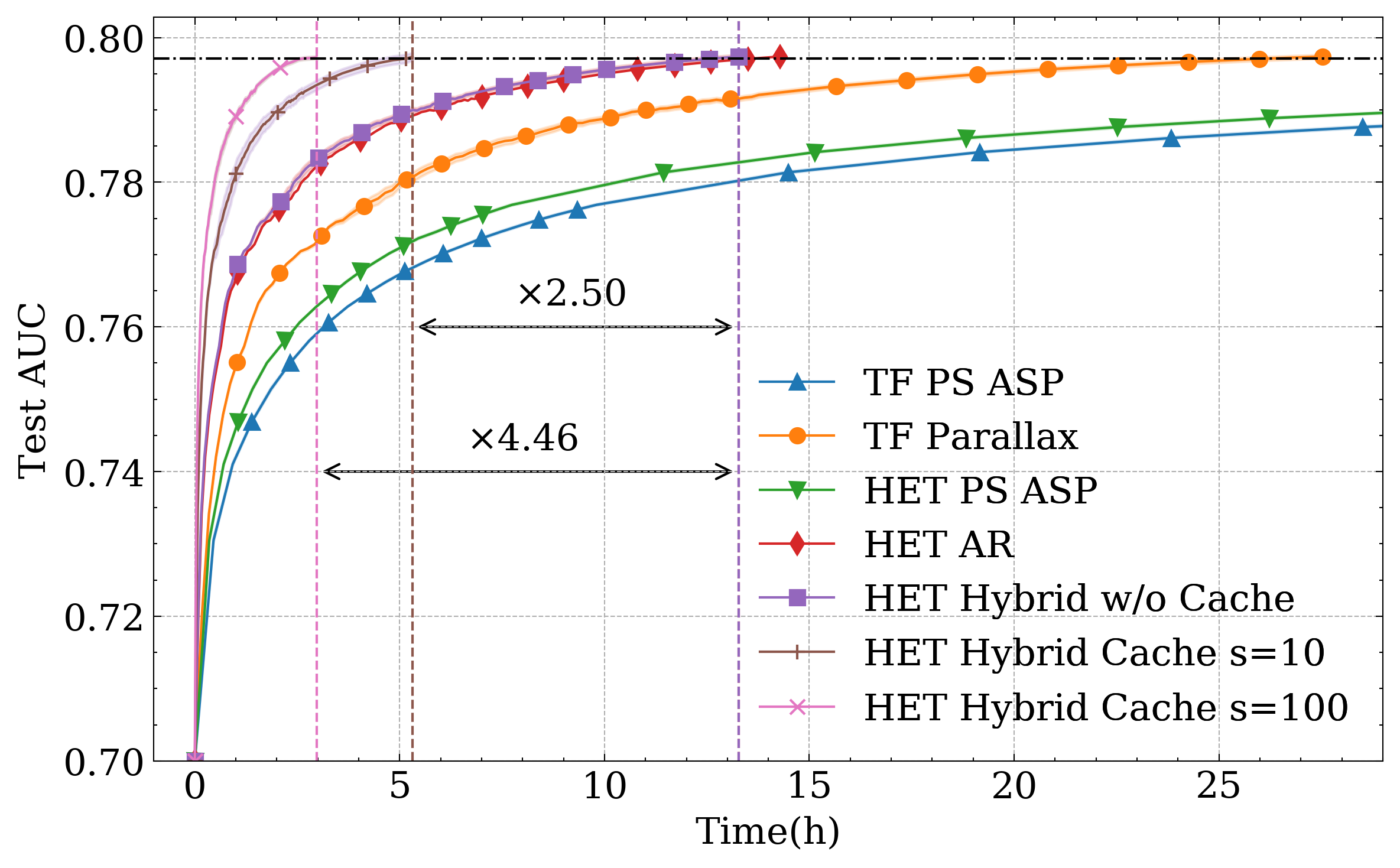}
}
\label{fig:dfm_conv}
}
\subfigure[DCN-Criteo]{
\scalebox{.315}{
\includegraphics[width=1.0\linewidth]{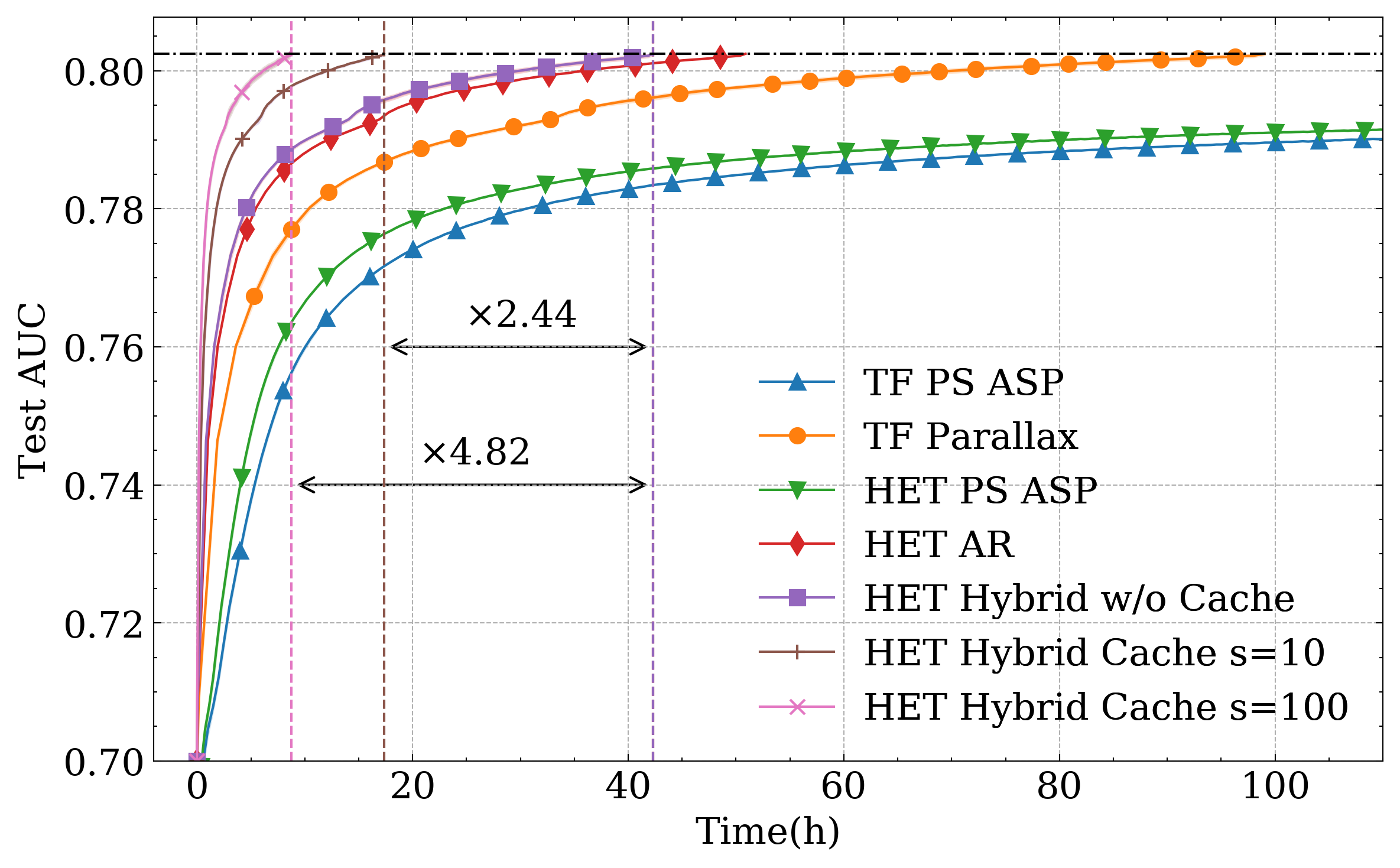}
}
\label{fig:dcn_conv}
}
\vspace{-2mm}
\subfigure[GNN-Amazon]{
\scalebox{.315}{
\includegraphics[width=1.0\linewidth]{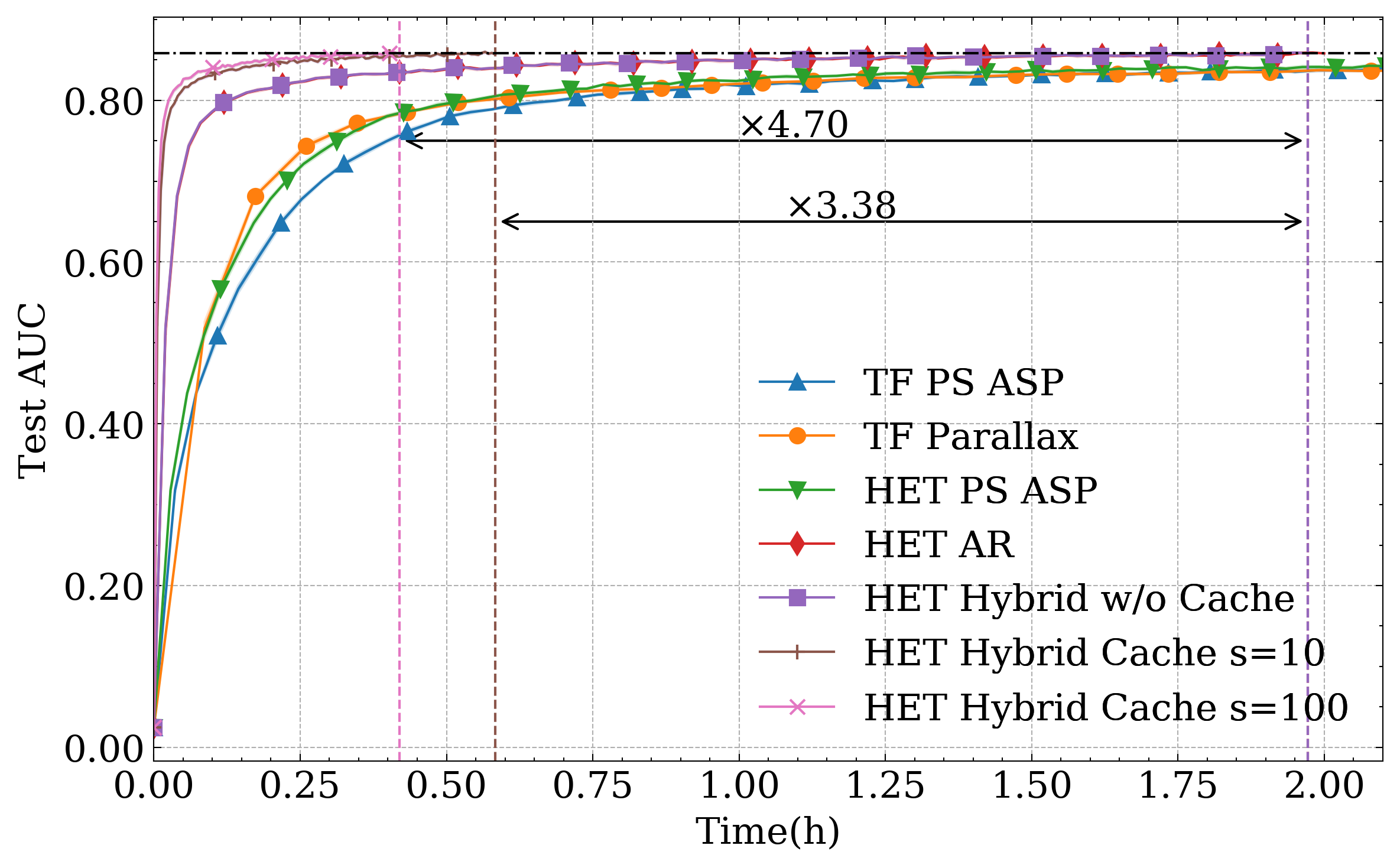}
}
\label{fig:amazon_conv}
}
\subfigure[GNN-Reddit]{
\scalebox{.315}{
\includegraphics[width=1.0\linewidth]{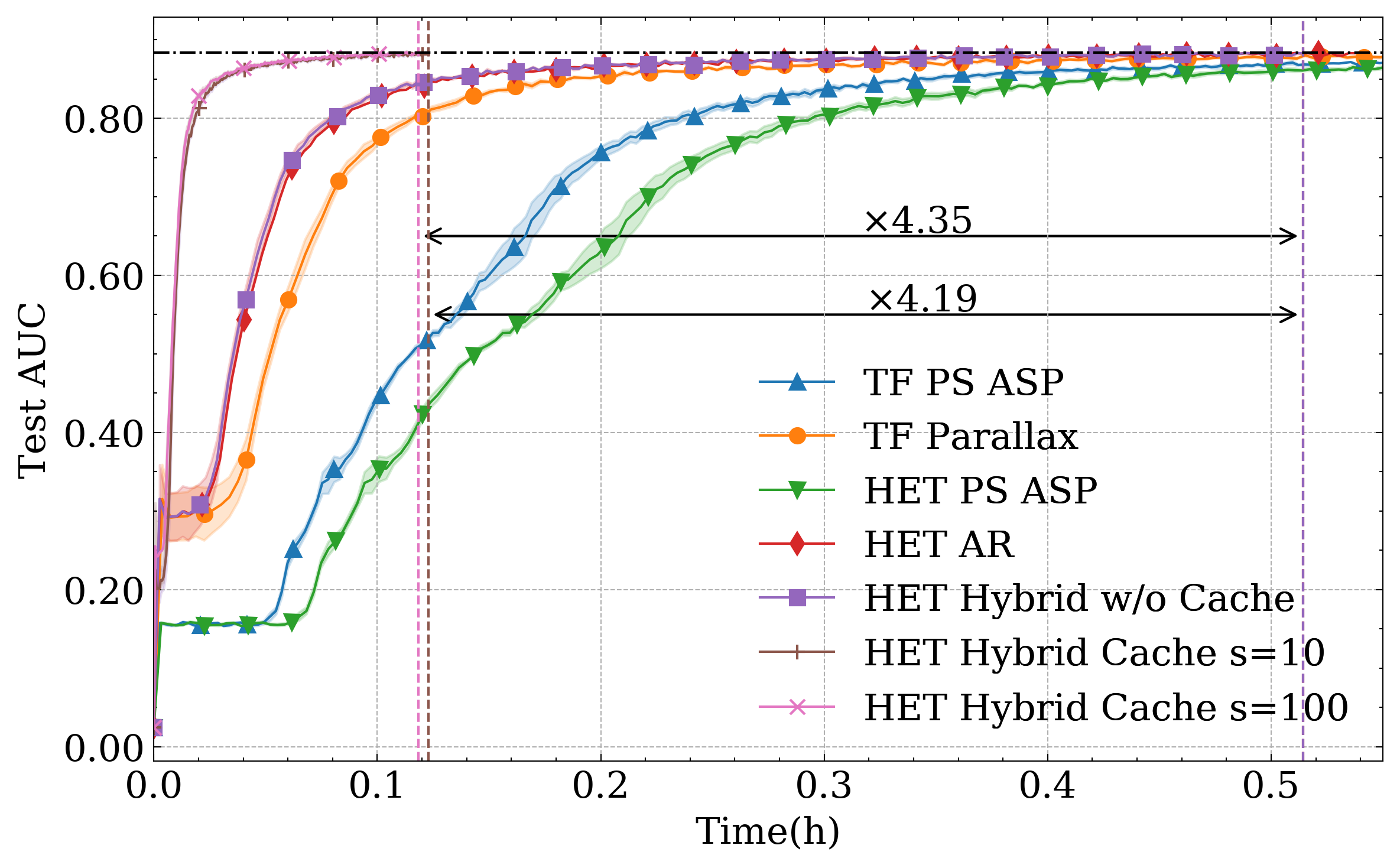}
}
\label{fig:reddit_conv}
}
\subfigure[GNN-obgn-mag]{
\scalebox{.315}{
\includegraphics[width=1.0\linewidth]{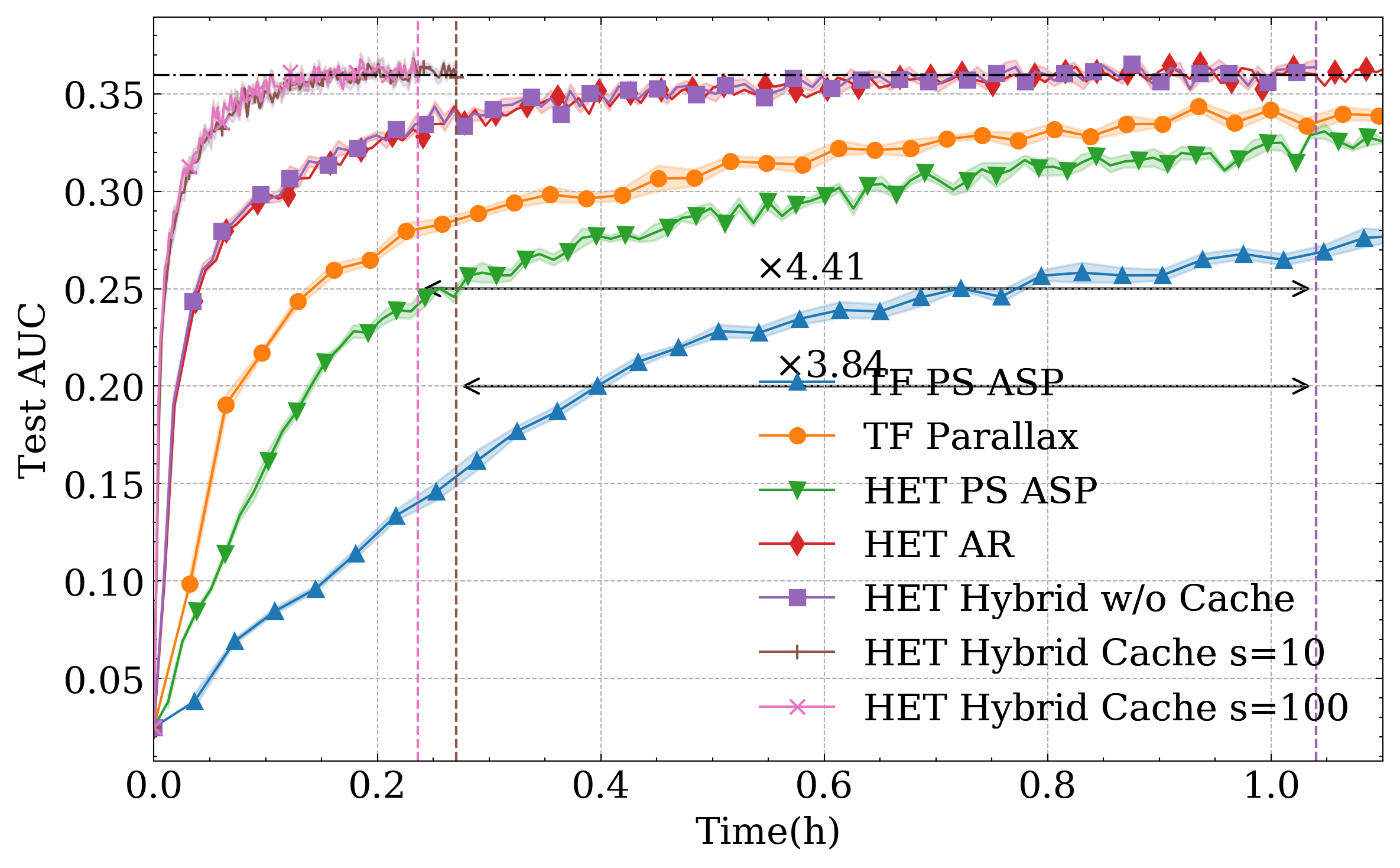}
}
\label{fig:obgn_conv}
}
\vspace{-2mm}
\caption{Convergence performance comparison.}\label{fig:conv}
\vspace{-2mm}
\end{figure*}

\section{\name Implementation}
\name's implementation is built on Hetu\footnote{\url{https://github.com/PKU-DAIR/Hetu/}}, a DL system consists of 14.5K LOC in C/C++/CUDA with a Python dataflow front-end (20.7K LOC).
It is easy to extend our cache embedding mechanism to other DL systems by replacing the DL runtime (e.g., TensorFlow, PyTorch, MXNet). Taking TensorFlow (TF) as an example, we could first replace the native TF parameter server with our HET server to store the global embedding table. Then we could implement an embedding variable inheriting from TF, and encapsulate the lookup/update operations with HET client interfaces. We leave the extension as our future work.

In our hybrid communication architecture, AllReduce is implemented by MPI~\cite{10.5555/898758} and NCCL~\cite{nccl}; the key components -- \name client and server, are developed based on PS-Lite\cite{pslite}, a lightweight implementation of PS interface. Currently, we implement the embedding table in C++ and store the cache embedding table in the limited DRAM (e.g., 12 GB) of each worker in our experiments. We manage to improve the overall performance by leveraging the following implementation optimizations. 

\vspace{-2mm}
\subsection{Asynchronous Communication Invocation}\label{section:comp_graph}
Similar to TensorFlow\cite{DBLP:conf/osdi/AbadiBCCDDDGIIK16}, we use a static computation graph abstraction to organize all the operations in \name. All operators implemented by GPU kernels are scheduled into the GPU stream~\cite{9261124}. These operators will be launched and executed asynchronously to avoid blocking the CPU execution. In \name, the communication operations (e.g., AllReduce, Fetch and Evict) are also treated similarly to overlap with computations. To ensure dependencies between computation and communication, we borrow the idea of CUDA event from GPU. We asynchronously launch communication requests and record corresponding events to synchronize when the updated parameter should be used in the next iteration.

\vspace{-2mm}
\subsection{Message Fusion}
Combining \texttt{Pull} (model parameters) and \texttt{Push} (model gradients) operations in parameter server architecture is a common technique~\cite{DBLP:conf/sosp/PengZCBYLWG19} for performance improvement.
It is easy to implement for traditional dense parameters. As for sparse parameters, especially for embedding tables, it is non-trivial since the sparse access property of embedding models.
To combine the cache eviction and fetching, we need to pre-fetch the next mini-batch of data in advance to inform the embedding indices. Besides, we also remove the duplicate keys in the request of each mini-batch to reduce redundant communication costs.

\vspace{-2mm}
\subsection{Cache Strategies}
\label{sec:cache_policy}
The goal of our cache strategy is to maximize the embedding lookup hit rate within a restricted amount of memory. The effectiveness of the cache is decided by two major factors: the query frequency from local workers and the length of its expiration period. The latter one is affected by other workers' workload and is hard to predict, so we only focus on optimizing the first objective, which is to cache the most frequently used embeddings by the local workers (e.g., LFU and LRU policy). Due to the high maintenance cost incurred by LFU, we provide a light-weighted version of LFU. When the frequency of an embedding is high enough, it will be assigned a direct access index, bypassing the cost of frequency maintenance. Under the same workload, it could have a similar miss rate as the original LFU while retaining a significantly small run-time cost.

\section{Experiments}

\paragraph{\textbf{Baselines}} In this section, we compare our prototype system with two state-of-the-art systems: TensorFlow (TF)~\cite{DBLP:conf/osdi/AbadiBCCDDDGIIK16} and Parallax~\cite{DBLP:conf/eurosys/KimYPCJHLJC19}. 
To alleviate the concerns on the difference from the system backbones and implementations, we implement three auxiliary baselines over our system named by \name PS (ASP), \name AR (AllReduce) and \name Hybrid (w/o Cache). 
\name PS follows the ASP algorithm in TensorFlow and each worker pushes its updates to the server without waiting for the others. \name Hybrid keeps the hybrid communication architecture in \name but removes the cache embedding table. 
All of these three baselines are sharing the same computation kernels and communication optimizations (i.e., overlapping, pre-fetching) as \name (denoted by \name Hybrid Cache or \name Cache in the following).

\vspace{-2mm}
\paragraph{\textbf{Datasets and models}}
We select two categories of representative embedding model workloads, including the deep learning recommendation model (DLRM) and the graph neural network (GNN). We use three industrial DLRM models consisting of Wide \& Deep (WDL)~\cite{DBLP:conf/recsys/Cheng0HSCAACCIA16}, DeepFM (DFM)~\cite{DBLP:conf/ijcai/GuoTYLH17} and Deep \& Cross (DCN)~\cite{DBLP:conf/kdd/WangFFW17}. They are evaluated on a popular recommendation dataset, Criteo~\cite{criteo}, which is also the largest standard benchmark in MLPerf~\cite{mlperf}. There could be more than \textit{one trillion model parameters} (i.e., $10^{12}$ floats) from the embedding table when we set $D=4096$ on Criteo.

The second category is GNN~\cite{DBLP:journals/chinaf/GuoQX021,miao2021lasagne,DBLP:conf/kdd/MiaoGZHLMRRSSWW21} model and we select the most popular GraphSAGE~\cite{DBLP:conf/nips/HamiltonYL17} for large graphs. We evaluate on node classification tasks and adopt several graph datasets with different scale: Reddit\cite{hamilton2017inductive}, Amazon\cite{clustergcn} and ogbn-mag\cite{DBLP:conf/nips/HuFZDRLCL20}. More details about the datasets and models are in the Appendix B~\cite{hetappendix}.

\begin{figure}[t]
\centering
\centering
\subfigure[1 Gb cluster]{
\scalebox{1}{
\includegraphics[width=1.0\linewidth]{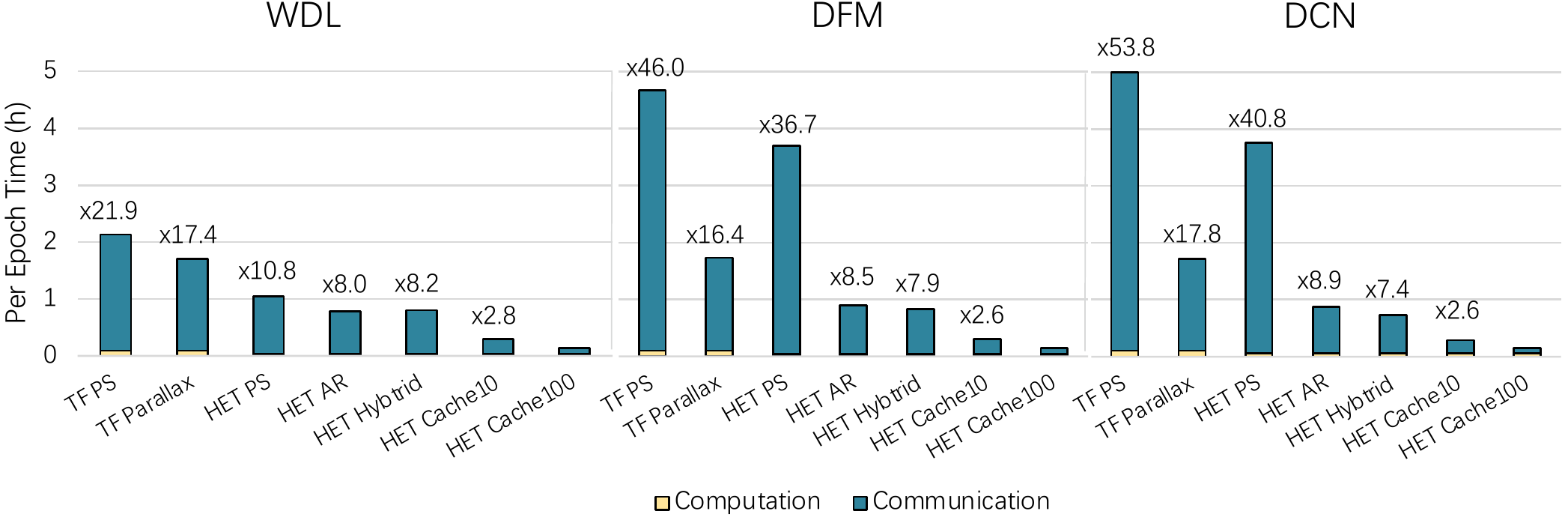}
}
\label{fig:recepoch_a}
}
\subfigure[10 Gb cluster]{
\scalebox{1}{
\includegraphics[width=1.0\linewidth]{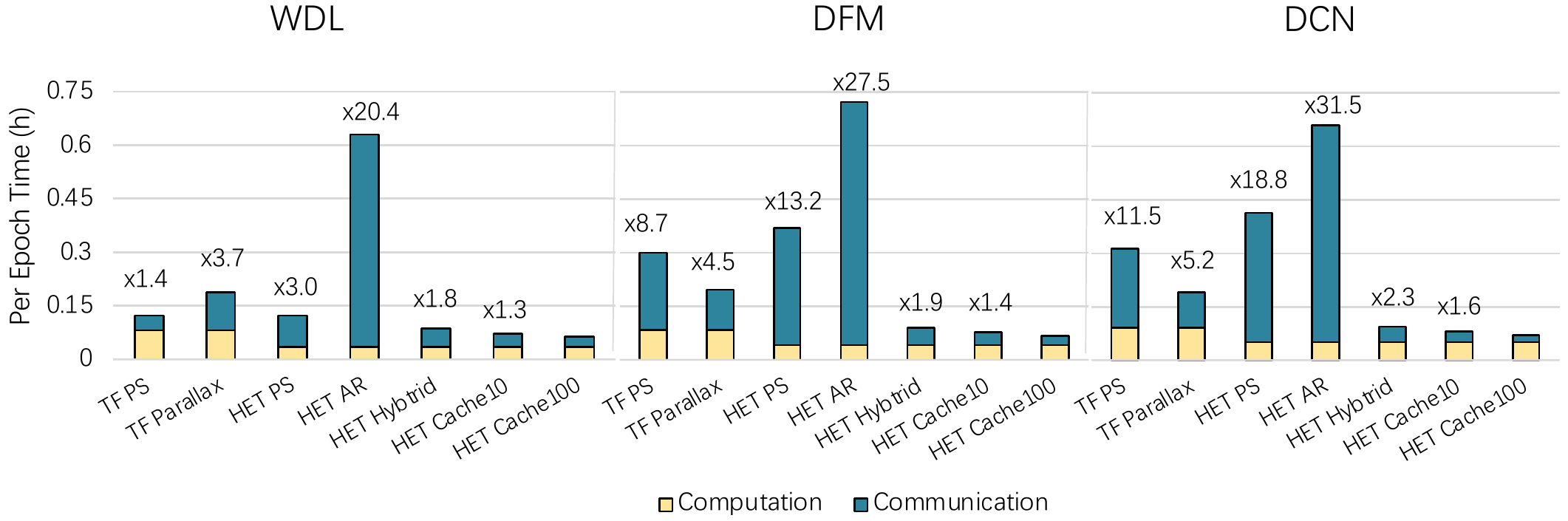}
}
\label{fig:recepoch_b}
}
\vspace{-4mm}
\caption{Per epoch time and the speedup of communication time on DLRM tasks.}
\vspace{-4mm}
\label{fig:recepoch}
\end{figure}

\paragraph{\textbf{Experimental setting}}
We implement all of these models in TensorFlow 1.15 and select SGD optimizer with the batch size of 128; the learning rate is selected from [0.001, 0.01, 0.1] by grid search. We have two GPU clusters for our evaluation. In cluster A, each node is equipped with an Nvidia RTX TITAN 24 GB card supporting PCIe 3.0 and 12 GB DRAM for cache (we temporarily improve the DRAM size to 48 GB in the last model scalability experiment). While the servers are in a CPU cluster and each node has two Intel Xeon Gold 5120 CPUs and 376 GB DRAM. The servers and workers clusters are connected by a 1 Gbit Ethernet. Cluster B has similar configurations, but the GPU is replaced by Nvidia V100 and the network is improved to 10 Gbit Ethernet.
The testing AUC thresholds of convergence are set to be around 80\% for the Criteo dataset, as reported in~\cite{DBLP:conf/recsys/Cheng0HSCAACCIA16}. 
For the GNN datasets, due to the customized feature engineering step (e.g., introducing sparse features), we manually set the termination point by predefined values.
All experiments are executed five times, and the averaged results are reported.

\subsection{End-to-end Comparison}
In this section, we first provide end-to-end comparison experiments with the baselines. These experiments are evaluated on 8 workers and 1 remote server. The size of the cache embedding table is set to be 10\% size of the global cache embedding table (listed in Figure~\ref{fig:workload}). We set $D=128$ in the following experiments and tune it in Sec.~\ref{sec:scala}.

\textit{\textbf{Convergence efficiency.}}
Figure~\ref{fig:conv} shows the convergence curves on six different workloads on cluster A. TF PS and HET PS follow the ASP algorithm and cannot converge to the target thresholds in these workloads.
We provide \name with different staleness thresholds $s=10$ and $100$. 
As we can see, our system always outperforms the other baselines on all tasks. When $s=100$, benefited from our cache embedding table mechanism, we could achieve around 4.36-5.14$\times$ speed up compared to \name Hybrid. Compared to $s=10$, it is natural that using a larger $s$ could alleviate more communication costs. The variances regions are quite negligible, which verifies the convergence stability of our methods.
Table~\ref{tab:end2endtime} illustrates the end-to-end convergence time and our \name achieves 6.37-20.68$\times$ speedup compared to TF Parallax. PS-based ASP methods are not listed because they cannot achieve the convergence thresholds. Note that, in our system implementation, we make a unique operation for the keys before the embedding communication operations to avoid redundant embedding transferring costs. However, the GNN-Reddit workload is quite special. It only has node-id embeddings and all embeddings in a mini-batch of data samples are always unique naturally. Therefore, in this case, the costs of unique operation outweigh the benefits, making HET PS ASP slower than TF PS ASP.

\begin{table}[t]
\centering
\small
\caption{End-to-end convergence efficiency comparison.}
\label{tab:end2endtime}
\vspace{-2mm}
\scalebox{0.85}{
\begin{tabular}{l|rrrc}
\toprule 
\makecell{Convergence\\time (h)} & TF Parallax & \name Hybrid & \makecell{\name Cache\\$s=10$} & \makecell{\name Cache\\$s=100$}\\
\midrule 
WDL-Criteo & 56.402 ($\times$10.86) & 26.668 ($\times$5.14) & 9.938 ($\times$1.91) & \textbf{5.193}\\
DFM-Criteo & 27.529 ($\times$9.24) & 13.273 ($\times$4.46) & 5.314 ($\times$1.78) & \textbf{2.978}\\
DCN-Criteo & 99.023 ($\times$11.29) & 42.296 ($\times$4.82) & 17.341 ($\times$1.98) & \textbf{8.770}\\
\midrule
GNN-Amazon  & 8.667 ($\times$20.68) & 1.972 ($\times$4.71) & 0.583 ($\times$1.39) & \textbf{0.419}\\
GNN-Reddit  & 0.752 ($\times$6.37) & 0.514 ($\times$4.36) & 0.123 ($\times$1.04) & \textbf{0.118} \\
GNN-ogbn-mag  & 3.869 ($\times$16.39) & 1.040 ($\times$4.41) & 0.271 ($\times$1.15) & \textbf{0.236}\\
\bottomrule
\end{tabular}}
\vspace{-4mm}
\end{table}

\textit{\textbf{Communication speedup.}}
We also compare the per epoch time on DLRM tasks and provide the following findings. First, through comparative analysis on the per epoch time in Figure~\ref{fig:recepoch_a} and the learning curves in Figure~\ref{fig:conv}, 
we find that \name PS and TF PS follow the same statistical efficiency~\cite{DBLP:conf/sigmod/JiangCZY17} in Figure~\ref{fig:conv} (\name Hybrid and TF Parallax are also the same). Their different convergence speeds come from the backbone optimizations, which verify the correctness of our implementation. Second, PS-based methods often show poor performance compared to hybrid-based methods due to the dense communication. The phenomenon in WDL is not significant because it has fewer dense model parameters than DFM and DCN. Third, these results in Figures~\ref{fig:conv} and \ref{fig:recepoch_a} also imply that existing PS and hybrid communication techniques cannot fundamentally solve the communication bottleneck in large-scale embedding model training. Fortunately, due to the fine-grained caching and consistency, our proposed \name achieves significant performance improvement and up to 88\% ($\approx 1-1/8.2$) embedding communication reduction. Figure~\ref{fig:recepoch_b} shows the per epoch time and communication time speedup comparison on three DLRM tasks under 10 Gbit Ethernet cluster. As shown in Figure~\ref{fig:recepoch_b}, although the speedups are smaller than those in the 1 Gbit Ethernet cluster due to the higher network bandwidth, the communication costs are still the bottleneck of the model training process. We see that our system still outperforms these baselines and achieves up to 2.3$\times$ and 5.2$\times$ speedup compared to HET Hybrid and TF Parallax respectively. Another interesting finding is that HET AR on 1 Gbit Ethernet cluster performs better than HET PS due to the utilization of the PCIe bandwidth cross GPUs. Although AllReduce degenerates to the inefficient AllGather primitive for sparse communication, it still achieves similar performance as HET Hybrid because these baselines involving the PS are suffering from the limited network bandwidth between servers and workers. When we use a 10 Gbit Ethernet, the high network bandwidth significantly improves the speed of PS-based methods. But HET AR maintains similar performance and becomes the slowest among all methods.

\begin{table}[t]
\caption{Final test AUC (\%) with different $s$ on Criteo}
\label{tab:accuracy}
\vspace{-2mm}
\centering
{
\noindent
\renewcommand{\multirowsetup}{\centering}
\resizebox{1.0\linewidth}{!}{
\begin{tabular}{ccccc|ccc}
\toprule
\textbf{Models}&textbf{$s=0$}&\textbf{$s=100$}&\textbf{$s=10k$}&\textbf{$s=\infty$} & \multirow{2}{*}{ \begin{tabular}[c]{@{}c@{}} \textbf{Cache miss} \\ \textbf{rate (WDL)}\end{tabular}} & \multirow{2}{*}{\begin{tabular}[c]{@{}c@{}} $s=0$ \end{tabular}} & \multirow{2}{*}{\begin{tabular}[c]{@{}c@{}} $s=100$ \end{tabular}} \\
\cmidrule(l){1-5} 
WDL & 79.64 & 79.62 & 78.84 & 74.61 \\
\cmidrule(l){6-8} DFM & 79.74 & 79.73 & 78.96 & 70.91 & 0\% & 80.17\% & 80.15\% \\
DCN & 80.25 & 80.24 & 79.76 & 75.29 & >0\% & 78.17\% & 78.12\%
\\
\bottomrule
\end{tabular}}
\vspace{-2mm}
}
\end{table}

\begin{figure}[t]
\centering
\includegraphics[width=0.95\linewidth]{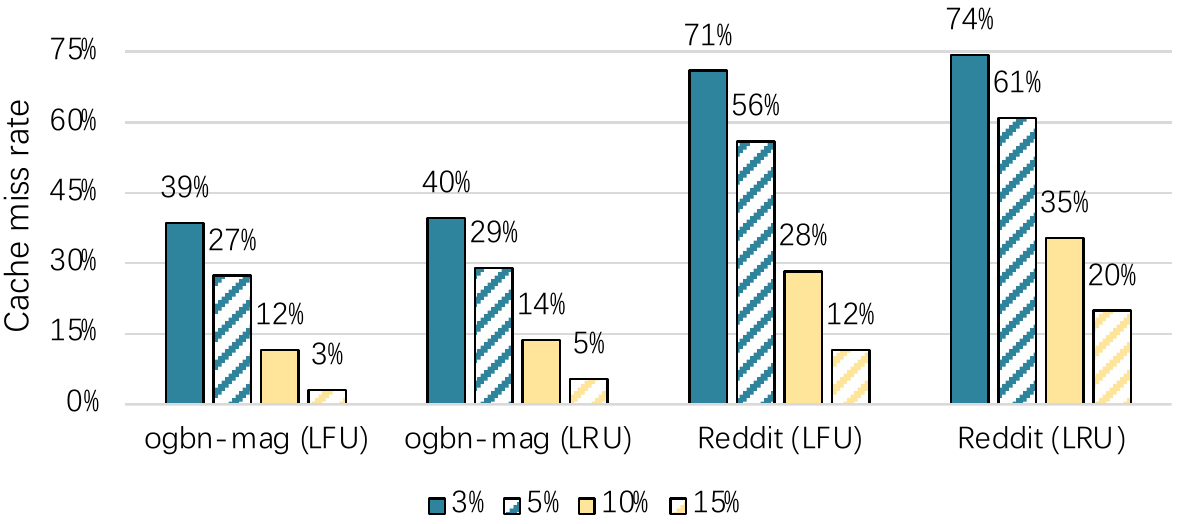}
\vspace{-2mm}
\caption{Cache miss rate under different cache space and strategy settings on GNN tasks.}
\vspace{-2mm}
\label{fig:cache_miss}
\end{figure}

\textit{\textbf{Convergence quality.}}  
We first investigate how much negative impact could the staleness have on the accuracy. We illustrate the convergent model performance (i.e., test AUC) with different staleness thresholds in Table~\ref{tab:accuracy} (left part). 
Due to the inherent robustness of iterative convergent algorithms, we find that our method reaches the target model quality even under moderate
levels of staleness ($s=100$), although the model degradation becomes apparent under high
staleness levels.

We next investigate pathological cases (e.g. is it not possible that prediction with the embedding parameters that are less frequently synchronized may result in less accurate results (i.e., bias)?) We make a further study on the test dataset of Criteo on WDL. As a cache hit (miss) implies the prediction uses stale (up-to-date) embedding parameters, we use cache miss rate to measure the frequency of the prediction using the stale (less frequently synchronized) embedding parameters.
Therefore, we split the test set into two sets based on the cache miss rate. As shown in Table~\ref{tab:accuracy} (right part), the predictions distribution from two models ($s=0$ and $s=100$) are very close, which demonstrates that the stale embeddings will not incur significant predication bias.

\subsection{System Configuration Sensitivity}
We study the impact of different cache embedding table sizes and cache strategy settings. We measure the cache miss rate on GNN task with ogbn-mag and Reddit datasets on cluster A. As shown in Figure~\ref{fig:cache_miss}, LFU often performs a lower cache miss rate than LRU. This is because LFU could reflect the long-term embedding access popularity better. We also evaluate different cache embedding table sizes, including 3\%, 5\%, 10\% and 15\%. As the cache table size growing, the cache miss rate significantly decreases. For ogbn-mag with LFU, given a piece of cache space whose size equals 15\% of the global embedding table size, almost 97\% embedding accesses are performed on the cache embedding table. This experiment strongly verifies the effectiveness of \name and explains how does our cache embedding table help to reduce the communication costs.

\begin{figure}[t]
	\subfigure[WDL-Criteo speedup]{
	    \includegraphics[width=0.47\linewidth]{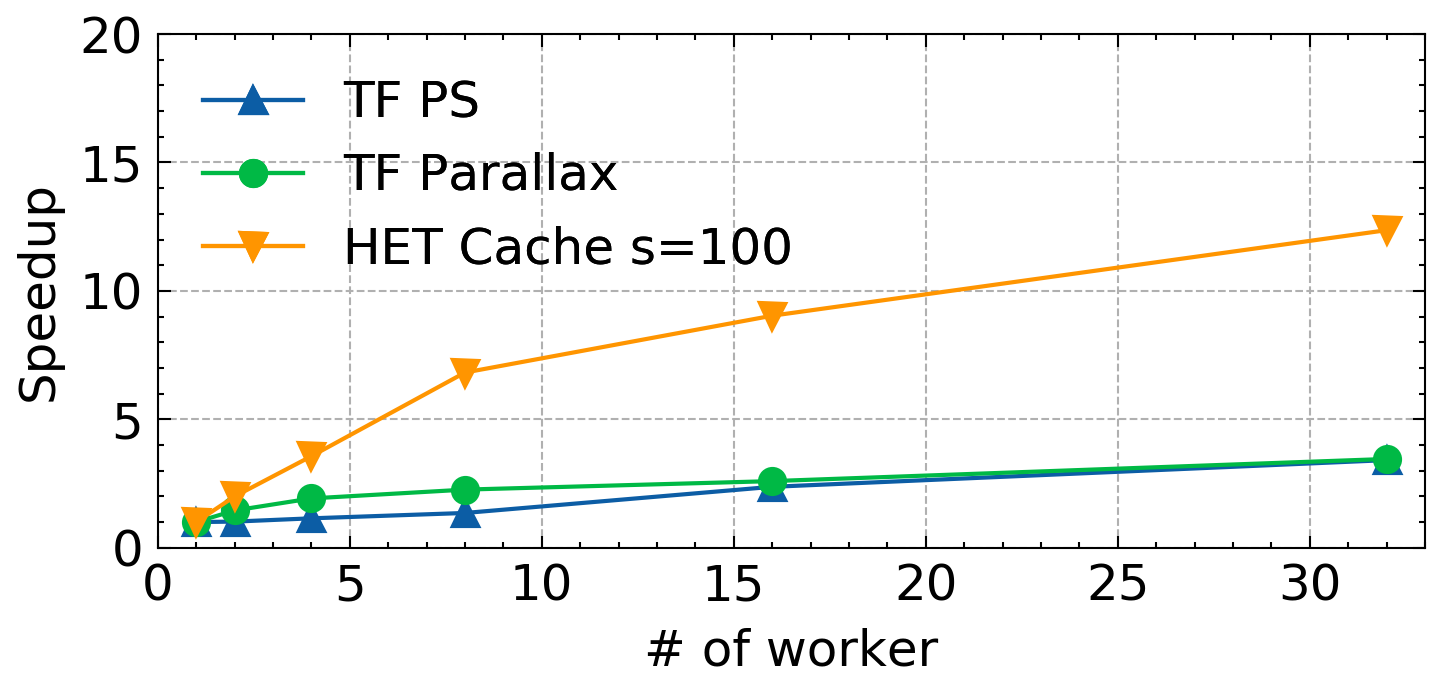}
	    \label{fig:speedwdl}
	}
	\subfigure[GNN-Reddit speedup]{
        \includegraphics[width=0.47\linewidth]{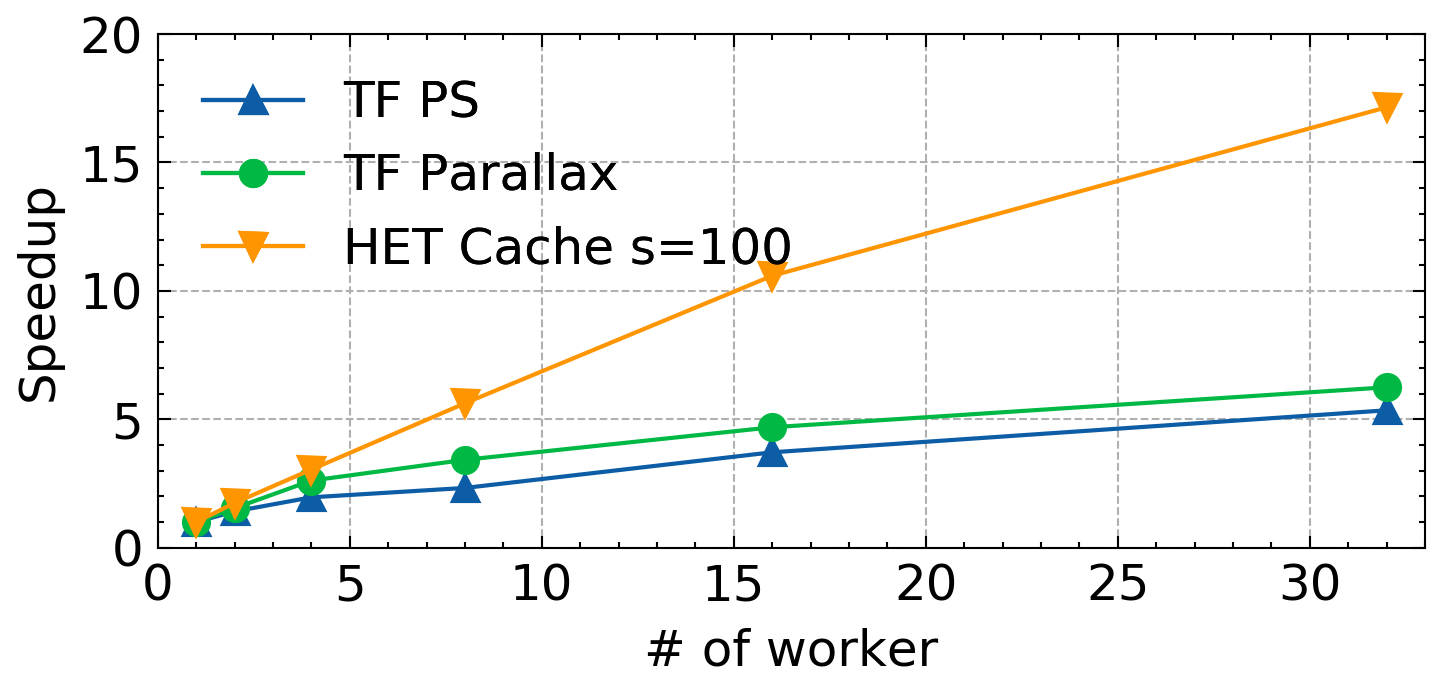}
        \label{fig:speedreddit}
    }
    \vspace{-4mm}
    \subfigure[WDL-Criteo with different embedding hidden size $D$]{
        \vspace{-2mm}
        \includegraphics[width=0.95\linewidth]{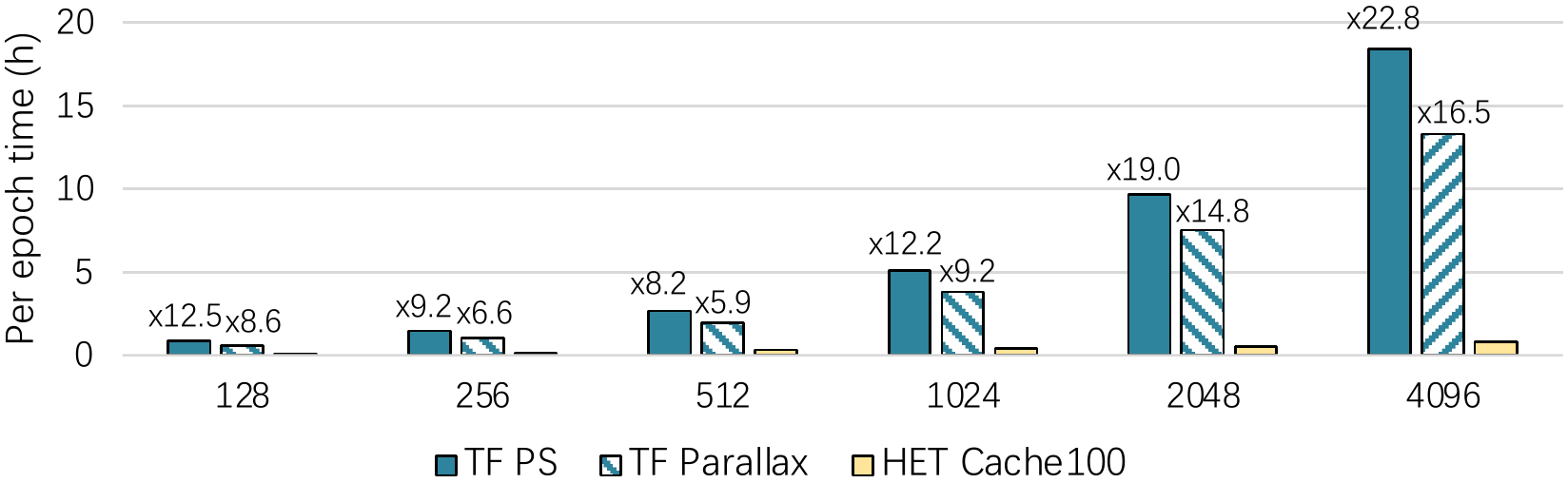}
        \label{fig:modelwdl}
    }
    \caption{Scalability study}\label{fig:scala}
    \vspace{-4mm}
\end{figure}

\subsection{Scalability}
\label{sec:scala}
We conduct a scalability study in terms of run time speedup on cluster A over 4 servers with 1, 2, 4, 8, 16, and 32 workers respectively. As shown in Figure~\ref{fig:speedwdl} and Figure~\ref{fig:speedreddit}, both TF PS and TF Parallax have limited scalability suffering from large amounts of embedding communication costs. By contrast, our \name achieves improved scalability by caching the hot embeddings. We also note that all methods show better scalability on GNN-Reddit than WDL-Criteo. Because the latter has a larger embedding table and becomes more communication-intensive and more difficult to scale up. 
We also study the model scalability of \name on WDL-Criteo by increasing the embedding size $D$ up to 4096 (around \textit{one trillion model parameters}) over 32 workers. Figure~\ref{fig:modelwdl} shows that \name significantly outperforms TF and Parallax since their PS architecture faces a more serious communication bottleneck with such a large scale embedding table.

\section{Conclusion}
Embedding model trained on high-dimensional data is common at modern web companies and poses
an extra challenge to standard frameworks: the high communication overhead causes the embedding workloads to have low
execution efficiency and scalability. To address this performance bottleneck, we presented \name, a system framework leveraging the
embedding cache architecture combined
with fine-grained consistency and stale-write protocols. 
Experimental results have shown that \name could reduce
up to 88\% embedding communication and achieve up to $20.68\times$ performance
improvements, compared to the state-of-the-art baselines. We hope that this work and the
open-source release of \name helps motivate the release
of larger high-dimension datasets from modern web companies and the increase of research on larger embedding models.

\section*{Acknowledgments}
This work is supported by the National Key Research and Development Program of China (No. 2018YFB1004403), the National Natural Science Foundation of China under Grant (No. 61832001), Beijing Academy of Artificial Intelligence (BAAI), PKU-Baidu Fund 2019BD006, and PKU-Tencent Joint Research Lab. Bin Cui and Zhi Yang are the co-corresponding authors.
  
\newpage
\balance
\bibliographystyle{ACM-Reference-Format}
\bibliography{my-reference}


\end{document}